\newtheorem{theorem}{Theorem}
\newtheorem{definition}{Definition}
\newtheorem{lemma}{Lemma}
\title{Learning Robust and Privacy-Preserving Representations via Information Theory}
\author {
    % Authors
    Binghui Zhang\textsuperscript{\rm 1},
    Sayedeh Leila Noorbakhsh\textsuperscript{\rm 1},
    Yun Dong\textsuperscript{\rm 2},
    Yuan Hong\textsuperscript{\rm 3},
    Binghui Wang\textsuperscript{\rm 1}
}
\title{My Publication Title --- Single Author}
\author {
    Author Name
}
\title{My Publication Title --- Multiple Authors}
\author {
    % Authors
    First Author Name\textsuperscript{\rm 1,\rm 2},
    Second Author Name\textsuperscript{\rm 2},
    Third Author Name\textsuperscript{\rm 1}
}
\author{
    %Authors
    % All authors must be in the same font size and format.
    Written by AAAI Press Staff\textsuperscript{\rm 1}\thanks{With help from the AAAI Publications Committee.}\\
    AAAI Style Contributions by Pater Patel Schneider,
    Sunil Issar,\\
    J. Scott Penberthy,
    George Ferguson,
    Hans Guesgen,
    Francisco Cruz\equalcontrib,
    Marc Pujol-Gonzalez\equalcontrib
}
\begin{document}

\maketitle

\begin{abstract}
Machine learning models are vulnerable to both security attacks (e.g., adversarial examples) and privacy attacks  (e.g., private attribute inference). 
We take the first step to mitigate both the security and privacy attacks, and maintain task utility as well.
Particularly, we propose an information-theoretic framework to achieve the goals through the lens of representation learning, i.e., learning  representations that are robust to both  adversarial examples and attribute inference adversaries. 
We also derive novel theoretical results under our framework, e.g., the inherent trade-off between adversarial robustness/utility and attribute privacy, and guaranteed attribute privacy leakage against attribute inference adversaries. 
\end{abstract}

\begin{links}
    \link{Code\&Full Report}{https://github.com/ARPRL/ARPRL}
    %\link{Datasets}{https://aaai.org/example/datasets}
    %\link{Full version}{xxx}
\end{links}

\section{Introduction}
\label{sec:intro}

Machine learning (ML) has achieved breakthroughs in
many areas such as computer vision and natural language processing. 
However, recent works show current ML design is vulnerable to both security and privacy attacks, e.g., adversarial examples and private attribute inference. 
Adversarial examples~\citep{szegedy2013intriguing,CarliniSP17,qu2023certified,hong2024certifiable}, i.e., natural data with imperceptible perturbations, cause ML models to make incorrect predictions and 
prevent them from being deployed in safety-critical applications such as autonomous driving~\citep{eykholt2018robust} and medical imaging~\citep{bortsova2021adversarial}. 
Many real-world applications involve data containing private information, such as race, gender, and income. When applying ML to these applications, it poses a great challenge as private attributes can often be accurately inferred~\citep{AttriInferWWW17,aono2017privacy,melis2019exploiting}. 

To mitigate adversarial examples and attribute inference attacks, many defenses are proposed but follow two separate lines with different techniques.  
For instance, state-of-the-art defenses against adversarial examples are based on adversarial training~\citep{madry2018towards,zhang2019theoretically,wang2019improving}, which solves a min-max optimization problem.  
In contrast, a representative defense against inference attacks is based on differential privacy~\citep{abadi2016deep}, which is a statistical method. 
Further, some works~\citep{song2019privacy,song2019membership} show adversarially robust models can even leak more private information (also verified in our results).

In this paper, we focus on the research question: 
\emph{1) Can we design a model that ensures adversarial robustness and attribute privacy protection while maintaining the utility of any (unknown) downstream tasks simultaneously? 2) Further, can we theoretically understand the relationships among adversarial robustness, utility, and attribute privacy?}
To achieve the goal, we propose an information-theoretic defense framework  through the lens of \emph{representation learning}, termed {\bf ARPRL}. 
Particularly, instead of training models from scratch, which requires huge computational resources and is time consuming, shared learnt representations ensures the community to save much time and costs for future use. 
Our ARPRL is partly inspired by two  works \citep{zhu2020learning,zhou2022improving}, which show  adversarially robust representations based defenses outperform the \emph{de facto} adversarial training based methods, while being \emph{the first work} to contrivially generalize learning data representations that are robust to both adversarial examples and attribute inference adversaries. More specifically, we formulate learning representations via three mutual information (MI) objectives: one for adversarial robustness, our for attribute privacy protection, and one for utility preservation. 
We point out that our ARPRL is \emph{task-agnostic}, meaning the learnt representations does not need to know the target task at hand and can be used for any downstream task. 
Further, based on our MI objectives, we can derive several theoretical results. For instance, we obtain an inherent tradeoff between adversarial robustness and attribute privacy, as well as between utility and attribute privacy. These tradeoffs are also verified through the experimental evaluations on multiple benchmark datasets. We also derive the guaranteed attribute privacy leakage. Our key contributions are as below:

\begin{itemize}
\item 
We propose the first information-theoretic framework to unify both adversarial robustness and privacy protection.
\item We formulate learning  adversarially robust and privacy-preserving
representations via mutual information goals and train neural networks to approximate them. 
\item We provide novel theoretical results: the tradeoff between adversarial robustness/utility and attribute privacy, and guaranteed attribute privacy leakage. 
\end{itemize}

\section{Preliminaries and Problem Setup}
\label{sec:background}

{\bf Notations.} We use ${s}$, ${\bf s}$, and $\mathcal{S}$ to denote (random) scalar, vector, and space, respectively. 
Given a data  ${\bf x} \in \mathcal{X} $, we denote  its label as  $y \in \mathcal{Y}$ and private attribute as $u \in \mathcal{U}$, where $\mathcal{X}$, $\mathcal{Y}$, and $\mathcal{U}$ are input data space, label space, and attribute space, respectively. 
An $l_p$ ball centered at a data  ${\bf x}$ with radius $\epsilon$ is defined as $\mathcal{B}_p({\bf x}, \epsilon) = \{{\bf x}' \in \mathcal{X}: \|{\bf x}'-{\bf x}\|_p \leq \epsilon \}$. 
The joint distribution of ${\bf x}$, $y$, and $u$ is denoted as $\mathcal{D}$.  
We further denote 
$f: \mathcal{X} \rightarrow \mathcal{Z}$ 
as the representation learner 
that maps ${\bf x} \in \mathcal{X}$ to its representation ${\bf z} \in \mathcal{Z}$, where $\mathcal{Z}$ is the representation space. 
We let $C: \mathcal{Z} \rightarrow \mathcal{Y}$ be the \emph{primary task classifier}, which predicts label $y$ based on ${\bf z}$, and $A: \mathcal{Z} \rightarrow \mathcal{U}$ be the \emph{attribute inference classifier}, which infers $u$ based on the representation ${\bf z}$. The mutual information (MI) of two random variables ${\bf x}$ and  ${\bf z}$ is denoted by $I({\bf x}; {\bf z})$. 

\noindent  {\bf Adversarial example/perturbation, adversarial risk, and representation vulnerability (Zhu et al., 2020)}.
Let $\epsilon$ be the $l_p$ perturbation budget.
For any classifier $C: \mathcal{X} \rightarrow \mathcal{Y}$, the \emph{adversarial risk}
of $C$ with respect to 
$\epsilon$ is defined as: 
\begin{align}
\label{eqn:advrisk}
\textrm{AdvRisk}_\epsilon(C) & = \textrm{Pr} [\exists {\bf x}' \in \mathcal{B}_p({\bf x}, \epsilon), \, \textrm{s.t. } C({\bf x}') \neq y] \notag \\
& = \sup\nolimits_{{\bf x}' \in \mathcal{B}_p({\bf x}, \epsilon)} \textrm{Pr} [C({\bf x}') \neq y],
\end{align}
where ${\bf x}'$ is called  \emph{adversarial example} and $\delta = {\bf x}' - {\bf x}$ is  \emph{adversarial  perturbation} with an $l_p$ budget $\epsilon$, i.e., $\|\delta\|_p \leq \epsilon$. 
Formally, adversarial risk captures the vulnerability of a classifier to adversarial perturbations.
When  $\epsilon = 0$, adversarial risk reduces to the standard risk, i.e., $\textrm{AdvRisk}_0(C) = \textrm{Risk}(C) = \textrm{Pr}(C({\bf x}) \neq y)$. 
Motivated by the empirical and theoretical difficulties of robust learning with adversarial examples, \citet{zhu2020learning,zhou2022improving} target learning adversarially
robust representations based on mutual information. 
They introduced the term \emph{representation vulnerability} as follow: 
Given a representation learner $f: \mathcal{X}  \rightarrow \mathcal{Z}$ and an $l_p$ perturbation budget $\epsilon$,  the representation
vulnerability of $f$ with respect to $\epsilon$ is defined as
\begin{align}
     \mathrm{RV}_\epsilon ({f}) & = \max\nolimits_{{\bf x}' \in \mathcal{B}_p({\bf x}, \epsilon)} [I({\bf x}; {\bf z}) - I({\bf x}';  {\bf z}')],
\end{align}
where ${\bf z} = f({\bf x})$ and ${\bf z}' = f({\bf x}')$
are the learnt representation for ${\bf x}$ and ${\bf x}'$, respectively.
We note  \emph{higher/smaller $\mathrm{RV}_\epsilon ({f})$ values imply the representation is less/more robust to adversarial perturbations}. 
Further, \cite{zhu2020learning} linked the connection between adversarial robustness and representation vulnerability through the following theorem: 
\begin{theorem}
\label{thm:rob_rep}
Consider all primary task classifiers as $\mathcal{C} = \{ C: \mathcal{Z} \rightarrow \mathcal{Y} \}$.
Given the perturbation budget $\epsilon$, for any representation learner $f: \mathcal{X} \rightarrow \mathcal{Z}$, 
\begin{small}
\begin{align}
    \inf_{C \in \mathcal{C}} \mathrm{AdvRisk}_\epsilon (C \circ f) & \geq 
    1 - \frac{\big({ I({\bf x}; {\bf z}) - \mathrm{RV}_\epsilon (f) + \log 2}\big)}{\log |\mathcal{Y}|}. \label{eqn:urtradeoff}
\end{align}
\end{small}

\end{theorem}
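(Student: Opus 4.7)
The plan is to lower bound the adversarial risk by the error probability at the hardest-case perturbation, and then convert that into the representation-level mutual information appearing in the statement via Fano's inequality and the data processing inequality.

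First, I would pick $\mathbf{x}^\ast \in \mathcal{B}_p(\mathbf{x}, \epsilon)$ attaining the maximum in the definition of $\mathrm{RV}_\epsilon(f)$, and set $\mathbf{z}^\ast = f(\mathbf{x}^\ast)$, so that by construction $I(\mathbf{x}^\ast; \mathbf{z}^\ast) = I(\mathbf{x}; \mathbf{z}) - \mathrm{RV}_\epsilon(f)$. Because $\mathbf{x}^\ast$ is admissible in the supremum defining $\mathrm{AdvRisk}_\epsilon$, for every $C \in \mathcal{C}$ we have
\[
\mathrm{AdvRisk}_\epsilon(C \circ f) \;\geq\; \Pr[C(\mathbf{z}^\ast) \neq y],
\]
so it suffices to bound this right-hand side from below by a $C$-independent quantity. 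Applying Fano's inequality to the estimator $C(\mathbf{z}^\ast)$ of $y$, together with the weakening $H(P_e) \leq \log 2$, gives $\Pr[C(\mathbf{z}^\ast)\neq y] \geq (H(y \mid \mathbf{z}^\ast) - \log 2)/\log|\mathcal{Y}|$. Decomposing $H(y \mid \mathbf{z}^\ast) = H(y) - I(y; \mathbf{z}^\ast)$ and taking $H(y) = \log|\mathcal{Y}|$ (the uniform-label convention used in the cited information-theoretic analyses) rearranges this to $\Pr[C(\mathbf{z}^\ast)\neq y] \geq 1 - (I(y; \mathbf{z}^\ast) + \log 2)/\log|\mathcal{Y}|$.

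To finish, I would invoke the data processing inequality on the Markov chain $y \to \mathbf{x}^\ast \to \mathbf{z}^\ast$ (valid since an adversarial perturbation is a measurable function of $\mathbf{x}$ and possibly auxiliary randomness, but not of $y$ directly) to obtain $I(y; \mathbf{z}^\ast) \leq I(\mathbf{x}^\ast; \mathbf{z}^\ast) = I(\mathbf{x}; \mathbf{z}) - \mathrm{RV}_\epsilon(f)$, and substitute. Taking $\inf_{C \in \mathcal{C}}$ preserves the bound because the right-hand side no longer depends on $C$. The main obstacle I anticipate is articulating the Markov structure $y \to \mathbf{x}^\ast \to \mathbf{z}^\ast$ cleanly: one must model $\mathbf{x}^\ast$ as a (possibly randomized) map of $\mathbf{x}$ alone, which is the standard adversarial-attack convention but should be stated explicitly so that the data processing inequality applies. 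A secondary issue is the uniform-label assumption used to equate $H(y)$ with $\log|\mathcal{Y}|$: without it, the numerator should carry $H(y)$ instead, and I would either assume uniformity or keep $H(y)$ throughout and observe that $H(y) \leq \log|\mathcal{Y}|$ (which goes the wrong way and thus cannot simply be plugged in) in order to recover the stated form honestly.
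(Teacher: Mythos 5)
The paper does not prove this statement: Theorem~\ref{thm:rob_rep} is quoted as background from Zhu et al.\ (2020), and only Theorems~\ref{thm:urtradeoff}--\ref{thm:provprivacy} are proved in the appendix. Your argument is essentially the standard proof of that cited result: lower-bound $\mathrm{AdvRisk}_\epsilon$ by the error at the $\mathrm{RV}$-maximizing perturbation, apply Fano's inequality to $C(\mathbf{z}^\ast)$ with $H(P_e)\le\log 2$, and use the data processing inequality on $y\to\mathbf{x}^\ast\to\mathbf{z}^\ast$ to get $I(y;\mathbf{z}^\ast)\le I(\mathbf{x}^\ast;\mathbf{z}^\ast)=I(\mathbf{x};\mathbf{z})-\mathrm{RV}_\epsilon(f)$; all steps are sound. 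The one substantive caveat you raise is real: the step $H(y)=\log|\mathcal{Y}|$ requires uniformly distributed labels, which is an explicit hypothesis in the original theorem of Zhu et al.\ but is omitted in the restatement here, and without it the inequality $H(y)\le\log|\mathcal{Y}|$ indeed points the wrong way, so the bound should be stated with $H(y)$ in place of $\log|\mathcal{Y}|$ in the numerator's leading term.
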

The theorem states that a smaller representation vulnerability implies a smaller lower bounded adversarial risk, which means better adversarial robustness, and vice versa.  
Finally, $f$ is called $(\epsilon,\tau)$-robust if $\mathrm{RV}_\epsilon (f) \leq \tau$. 

\noindent  {\bf Attribute inference attacks and advantage.} Following existing privacy analysis~\cite{salem2023sok}, we assume the private attribute space $\mathcal{U}$ is binary.
Let $\mathcal{A}$ be the set of all binary attribute inference classifiers, 
i.e. $\mathcal{A}=\{A: \mathcal{Z}
\rightarrow \mathcal{U} = \{0,1\}\}$. 
Then, we formally define the \emph{attribute inference advantage} of the {worst-case} attribute inference adversary with respect to the joint distribution $\mathcal{D} = \{{\bf x}, y, u\}$ as below:
\begin{align}
    \label{eqn:adv}
    & \textrm{Adv}_{\mathcal{D}} (\mathcal{A}) = \max_{A \in \mathcal{A}} | \textrm{Pr}_{\mathcal{D}}(A({\bf z}) =a | u=a) \notag \\
    &- \textrm{Pr}_{\mathcal{D}} (A({\bf z}) =a | u=1-a) |, \, \forall a=\{0,1\}. 
\end{align}
We can observe that: if $\textrm{Adv}_{\mathcal{D}}(\mathcal{A}) =1$, an adversary can \emph{completely} infer the privacy attribute through the learnt representations. In contrast, if $\textrm{Adv}_{\mathcal{D}}(\mathcal{A}) =0$, an adversary obtains a \emph{random guessing} inference performance. To protect the private attribute, we aim to obtain a small $\textrm{Adv}_{\mathcal{D}}$.

\noindent  {\bf Threat model and problem setup.} 
We consider an attacker performing both attribute inference and adversarial example attacks. \emph{We assume the attacker does not have the access to the representation learner (i.e., $f$), but can obtain the shared data representations.
} 
Our goal is to learn task-agnostic representations that are adversarially robust, protect attribute privacy, and  maintain the utility of any downstream task. 
Formally, given data $\{{\bf x}, y, u\}$ from an underlying distribution $\mathcal{D}$, and a perturbation budget $\epsilon$, we aim to obtain the representation learner $f$ such that the representation vulnerability $RV_\epsilon(f)$ is small, attribute inference advantage $\textrm{Adv}_\mathcal{D}(\mathcal{A})$ is small, but the performance is high, i.e., $\textrm{Risk}(C)$ is small.

\section{Design of ARPRL}
\label{sec:design}

In this section, we will design our {\bf a}dversarilly {\bf r}obust and {\bf p}rivacy-preserving {\bf r}epresentation {\bf l}earning method, termed {\bf ARPRL}, inspired by information theory.
\subsection{Formulating ARPRL via MI Objectives}

Given a data ${\bf x}$ with private attribute $u$ sampled from a distribution $ \mathcal{D}$, and a perturbation budget $\epsilon$, we aim to learn the representation ${\bf z} = f({\bf x})$ for ${\bf x}$ that satisfies three goals: 

\begin{itemize}
\item \textbf{Goal 1: Protect attribute privacy.} 
${\bf z}$ contains as less information as possible about 
the private attribute 
$u$. Ideally, when ${\bf z}$ does not include information about 
$u$, i.e., ${\bf z} \perp u$, it is impossible to infer 
$u$ from 
${\bf z}$. 

\item \textbf{Goal 2: Preserve utility.} 
${\bf z}$ should be be useful for many downstream tasks. Hence it should include as much information about 
${\bf x}$ as possible, while excluding the private 
$u$. Ideally, when 
${\bf z}$ retains the most information about 
${\bf x}$, the model trained on 
${\bf z}$ will have the same performance as the model trained on the raw 
${\bf x}$ (though we do not know the downstream task), thus preserving utility.

\item \textbf{Goal 3: Adversarially robust.} 
${\bf z}$ should be not sensitive to adversarial perturbations on the data ${\bf x}$, indicating a small representation vulnerability.
\end{itemize}

We propose to formalize the above goals via MI. 
Formally, we quantify the goals as below:

\begin{align}
    & \textbf{Formalize Goal 1: }\quad \min_f I({\bf z}; u); \label{Eqn:minpriv} \\
    & \textbf{Formalize Goal 2: } \quad \max_f I({\bf x}; {\bf z} | u);  
    \label{Eqn:maxutil} \\ 
     & \textbf{Formalize Goal 3: }     \label{Eqn:RV}  \\ & \quad 
    \min_f  
    RV_{\epsilon}(f|u) = 
        \min_f \max_{{\bf x}' \in \mathcal{B}_p({\bf x}, \epsilon)} I({\bf x}; {\bf z} | u) -  I({\bf x}'; {\bf z}' | u). 
     \notag
\end{align}
where 
1) 
we minimize $I({\bf z}; u)$ to maximally reduce the correlation between ${\bf z}$ and the private attribute $u$; 
2) %$ I({\bf x}; {\bf z} | u) $ is the MI between ${\bf x}$ and ${\bf z}$ given $u$.
we maximize $ I({\bf x}; {\bf z} | u) $ to keep the raw information in ${\bf x}$ as much as possible in ${\bf z}$ while excluding information about the private $u$; 
3) $RV_{\epsilon}(f|u)$ is the representation vulnerability of $f$ conditional on $u$ with respect to $\epsilon$.  
Minimizing it learns adversarially robust representations that exclude the information about private $u$. 
Note that $I({\bf x}; {\bf z} | u) $ in Equation (\ref{Eqn:RV}) can be merged with that in Equation (\ref{Eqn:maxutil}). Hence Equation (\ref{Eqn:RV}) can be reduced to the below min-max optimization problem: 
%as below:
%\begin{small}
\begin{align}
    \max_f \min_{{\bf x}' \in \mathcal{B}_p({\bf x}, \epsilon)} I({\bf x}'; {\bf z}' | u).
    \label{Eqn:advrobust} 
\end{align}

\subsection{Estimating MI via Variational Bounds}
The key challenge of solving the above MI objectives is that calculating an MI between two arbitrary random variables is likely to be infeasible \citep{peng2018variational}. 
To address it, we are inspired by the existing MI neural estimation methods~\citep{alemi2017deep,belghazi2018mutual,poole2019variational,hjelm2019learning,cheng2020club}, which convert the intractable exact MI calculations to the tractable variational MI bounds.  
Then, we parameterize each variational MI bound with a neural network, and train the neural networks to approximate the true MI. \emph{We clarify that we do not design new MI neural estimators, but adopt existing ones to aid our customized MI terms for learning adversarially robust and privacy-preserving representations.} 

\noindent {\bf Minimizing upper bound MI in Equation (\ref{Eqn:minpriv}) for privacy protection.}
We adapt the variational upper bound CLUB proposed in~\cite{cheng2020club}. Specifically, via some derivations (see Appendix B), we have 
\begin{align*}
& I({\bf z};u) 
 \leq \min I_{vCLUB}({\bf z};u) \Longleftrightarrow  \max \mathbb{E}_{p({\bf z}, u)} [\log q_{\Psi}(u|{\bf z}) ]
\end{align*}
where 
$q_{\Psi}(u | {\bf z})$ is an auxiliary posterior distribution of $p(u | {\bf z})$. 
Then our {\bf Goal 1} for privacy protection can be reformulated as solving the 
min-max objective function below: 
\begin{align*}
        & \min_f \min \limits_{\Psi}  I_{vCLUB}({\bf z};u)
         \Longleftrightarrow \min_f \max \limits_{\Psi} 
        \mathbb{E}_{p({\bf z}, u)} [\log q_{\Psi}(u| {\bf z})]
        %\label{Eqn:prot_adv}.
\end{align*}
\noindent \emph{Remark.} 
This equation can be interpreted as an \emph{adversarial game} between: (1) an adversary $q_{\Psi}$ (i.e., attribute inference classifier) who aims to infer the private attribute $u$ from the representation ${\bf z}$; and (2) a defender (i.e., the representation learner $f$) who aims to protect $u$  from being inferred. 

\noindent {\bf Maximizing lower bound MI in Equation (\ref{Eqn:maxutil}) for utility preservation.}
We adopt the MI estimator proposed in~\cite{nowozin2016f} to estimate the lower bound of 
the MI Equation (\ref{Eqn:maxutil}). 
Via some derivations 
(see details in Appendix B) we have:
\begin{align*}
    I({\bf x} ; {\bf z} |u ) 
    \geq H({\bf x} | u) + \mathbb{E}_{p({\bf x} , {\bf z}, u)} [\log q_\Omega ({\bf x} | {\bf z}, u)],  
\end{align*}
where $q_{\Omega}$ is an \emph{arbitrary} 
auxiliary posterior distribution. 
Since $H({\bf x}|u)$ is a constant, our {\bf Goal 2} 
can be rewritten as the below max-max objective function:
\begin{align*}
    & \max_f I({\bf x}; {\bf z} | u) 
    \iff 
    \max_{f, \Omega} \mathbb{E}_{p({\bf x}, {\bf z}, u)} \left[\log {q_{\Omega}}({\bf x} | {\bf z}, u)  \right]. 
    %\label{Eqn:max_jsd}
\end{align*}
\noindent \emph{Remark.}
This equation can be interpreted as a \emph{cooperative game} between the representation learner $f$ and $q_{\Omega}$ who aim to preserve the utility collaboratively. 

\noindent {\bf Maximizing the worst-case MI in Equation (\ref{Eqn:advrobust}) for adversarial robustness.}
To solve Equation (\ref{Eqn:advrobust}), one needs to first find the perturbed data ${\bf x}' \in \mathcal{B}_p({\bf x}, \epsilon)$ that minimizes MI $I({\bf x}'; {\bf z}'|u)$, and then maximizes this MI by training the representation learner $f$.  
As claimed in~\cite{zhu2020learning,zhou2022improving}, minimizing the MI on the worst-case perturbed data is computational challenging. An approximate solution~\cite{zhou2022improving} is first performing a strong white-box attack, e.g., the projected gradient descent (PGD) attack~\cite{madry2018towards}, to generate a set of adversarial examples, and then selecting the adversarial example that has the smallest MI. 
Assume the strongest adversarial example is ${\bf x}^{a} = \arg\min_{{\bf x}' \in \mathcal{B}_p({\bf x}, \epsilon)} I({\bf x}'; {\bf z}' | u)$. The next step is to maximize the MI $I({\bf x}^{a}; {\bf z}^{a} | u)$. \citet{zhu2020learning} used the MI Neural Estimator (MINE)~\citep{belghazi2018mutual} to estimate this MI. Specifically, 
\begin{align*}
& I({\bf x}^{a}; {\bf z}^{a} | u) \geq 
I_{\Lambda} ({\bf x}^{a}; {\bf z}^{a} | u) = \mathbb{E}_{p({\bf x}^{a}, {\bf z}^{a}, u)} [t_{\Lambda}({\bf x}^{a}, {\bf z}^{a}, u)] 
\notag \\
& \qquad \quad 
- \log \mathbb{E}_{p({\bf x}^{a})p({\bf z}^{a})p(u)}[\exp(t_{\Lambda}({\bf x}^{a}, {\bf z}^{a}, u))], %\label{eqn:mine}
\end{align*}
where $t_{\Lambda}: \mathcal{X} \times \mathcal{Z} \times \{0,1\} \rightarrow \mathbb{R}$ can be any family of neural networks parameterized with $\Lambda$. 
More details about calculating the MI are deferred to the implementation section. 

\noindent {\bf Objective function of ARPRL.} By 
using the above MI bounds, 
our objective function of ARPRL is as follows:
{
\small
\begin{align}
    & \max_f \Big( \alpha  \min_{\Psi} -\mathbb{E}_{p({\bf x},u)} \left[ \log q_{\Psi}(u | f({\bf x})) \right] 
    + \beta \max_{\Lambda} I_{\Lambda} ({\bf x}^{a}; {\bf z}^{a} | u)
    \notag \\ 
    & \qquad + (1-\alpha-\beta) \big( \max_{\Omega} \mathbb{E}_{p({\bf x}, u)} \left[ \log q_{\Omega}({\bf x} | f({\bf x}), u) \right] \notag \\ 
    & \qquad \qquad +  \lambda \max_{\Delta} \mathbb{E}_{p({\bf x}, y)} \left[ \log q_\Delta (y| f ({\bf x})) \right]  \big) \Big) 
    \label{eq:final-obj}, 
\end{align}
}
where $\alpha, \beta \in [0,1]$ tradeoff between privacy and utility, and robustness and utility, respectively. That is, a larger/smaller $\alpha$ indicates a stronger/weaker attribute privacy protection and a larger/smaller $\beta$ indicates a stronger/weaker robustness against adversarial perturbations. 

\begin{figure}[!t]
    \centering
    \includegraphics[width=0.42\textwidth]{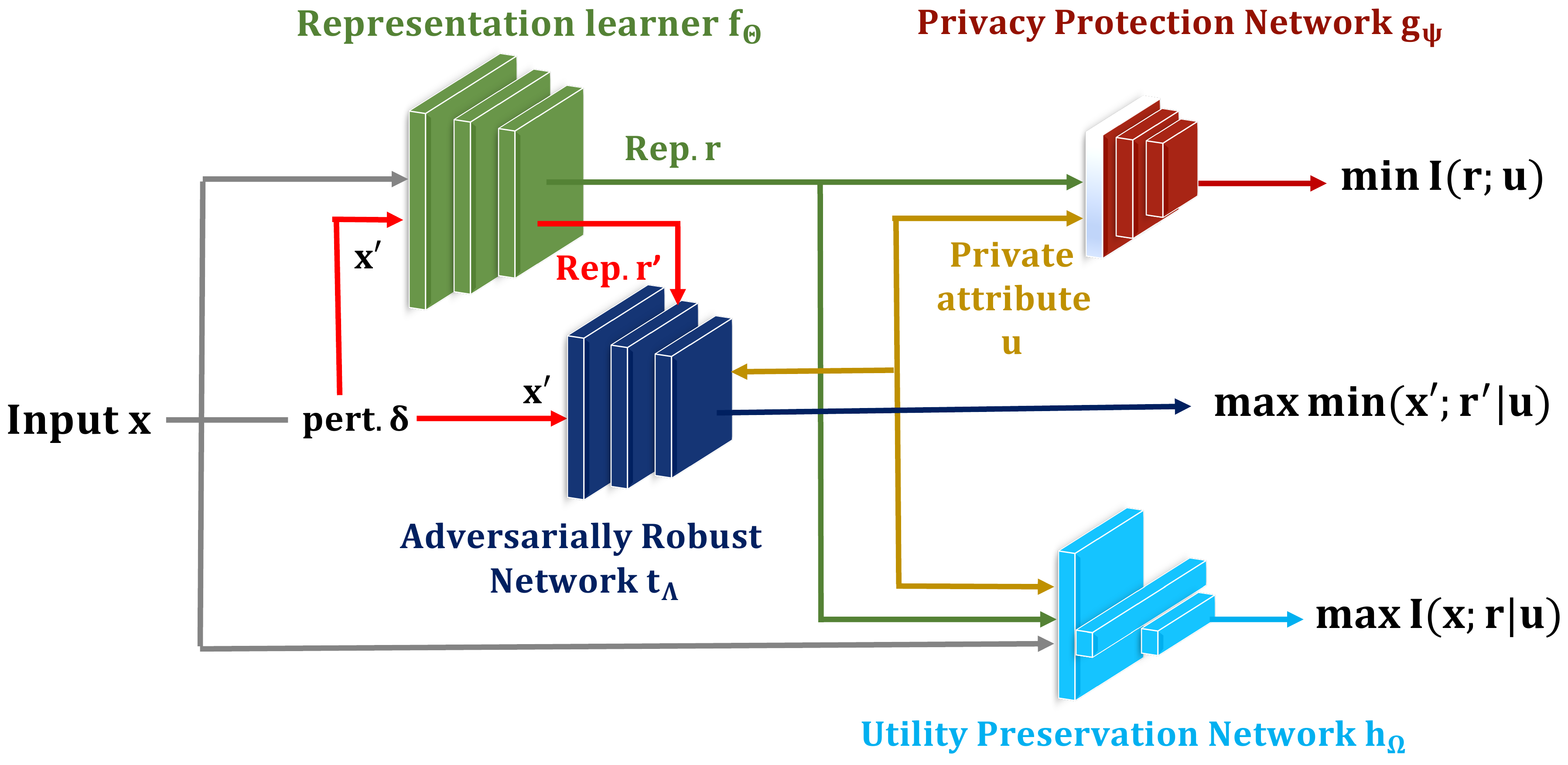}
    \caption{Overview of ARPRL.}
    \label{fig:diagram}
\end{figure}

\subsection{Implementation in Practice via Training Parameterized Neural Networks}
\label{sec:implementation}
In practice, Equation (\ref{eq:final-obj}) is solved via training four neural networks, i.e., the representation learner $f_\Theta$ (parameterized with $\Theta$),  privacy-protection network $g_{\Psi}$ associated with the auxiliary distribution $q_{\Psi}$, robustness network $t_{\Lambda}$ associated with the MINE estimator, and utility-preservation network  $h_{\Omega}$ associated with the auxiliary distribution $q_{\Omega}$, on a set of training data. 
Suppose we have collected a set of samples $\{({\bf x}_j, y_j, u_j)\}$ 
from the dataset distribution $\mathcal{D}$. 
We can then approximate each term in Equation (\ref{eq:final-obj}). 

Specifically, we approximate the expectation associated with the privacy-protection network network  $g_{\Psi}$ as  
{
\begin{align*}
\mathbb{E}_{p(u, {\bf x})}\log q_{\Psi}(u|f({\bf x})))\approx
 -\sum\nolimits_{j} CE(u_j,g_{\Psi}(f({\bf x}_j)))
 \end{align*} 
 }
 where $CE(\cdot)$ means the cross-entropy loss function. 
  
 Further, we approximate the expectation associated with the utility-preservation network  $h_{\Omega}$ via the \textit{Jensen-Shannon} (JS) MI estimator \cite{hjelm2019learning}. That is,
{
\small
\begin{align*}
& \mathop{\mathbb{E}}_{p({\bf x}, u)}\log q_{\Omega}({\bf x}|f({\bf x}),u) \approx I^{(JS)}_{\Theta,\Omega} ({\bf x}; f({\bf x}),u) 
\notag \\
& 
=\mathop{\mathbb{E}}_{p({\bf x},u)}[-\textrm{sp}(-h_{\Omega}({\bf x},f({\bf x}),u)]-\mathop{\mathbb{E}}_{p({\bf x},u,\bar{{\bf x}})}[\textrm{sp}(h_{\Omega}(\bar{{\bf x}},f({\bf x}),u)]
\end{align*}
}
where $\bar{\bf x}$ is an independent random sample of the same distribution as ${\bf x}$, and expectation can be replaced by samples $\{{\bf x}_j^i, \bar{\bf x}_j^i, u_j^i \}$. $\textrm{sp}(z) = \log(1+\exp(z))$ is a softplus function. 

Regarding the MI related to the robustness network $t_{\Lambda}$,  
we can adopt the methods proposed in \cite{zhu2020learning,zhou2022improving}. 
For instance, \cite{zhu2020learning} proposed to avoid searching the whole ball, and restrict the search space to be the set of
empirical distributions with, e.g., $m$ samples: 
$\mathcal{S}_m(\epsilon) = \{\frac{1}{m} \sum_{i=1}^m \delta_{{\bf x}'_i}: {\bf x}'_i \in \mathcal{B}_p({\bf x}_i, \epsilon), \forall i \in [m] \}$. 
Then it estimates the MI $\min_{{\bf x}' \in \mathcal{B}_p({\bf x}, \epsilon)} I({\bf x}'; f({\bf x}') | u)$ as 
{
\begin{align}
    \min_{{\bf x}'} I_{\Lambda}^{(m)} ({\bf x}';  f({\bf x}') | u) \, \textrm{ s.t. } {\bf x}' \in \mathcal{S}_m (\epsilon),
    \label{eqn:worst_mine}
\end{align}
}
where $I_{\Lambda}^{(m)} ({\bf x}'; f({\bf x}') | u) = \frac{1}{m} \sum_{i=1}^m t_{\Lambda}({\bf x}_i, f({\bf x}_i), u_i) - \log [\frac{1}{m} \sum_{i=1}^m e^{t_{\Lambda}({\bar{\bf x}}_i, f({\bf x}_i), u_i)}]$, where $\{\bar{\bf x}_i\}$ are independent and random samples that have the same distribution as $\{{\bf x}_i\}$.  

Zhu et al. 2020 propose an alternating minimization algorithm to solve Equation (\ref{eqn:worst_mine})---it alternatively performs gradient ascent on $\Lambda$ to maximize $I_{\Lambda}^{(m)}({\bf x}'; f({{\bf x}'})|u)$ given $\mathcal{S}_m(\epsilon)$, and then searches for the set of worst-case perturbations on $\{{\bf x}_i': i \in [m]\}$ given $\Lambda$ based on, e.g., projected gradient descent. Figure \ref{fig:diagram} overviews our ARPRL. Algorithm 1 in Appendix details the training of ARPRL.

\begin{figure*}[!t]
	\centering
	\subfloat[Raw circle data]
	{\centering\includegraphics[scale=0.23]{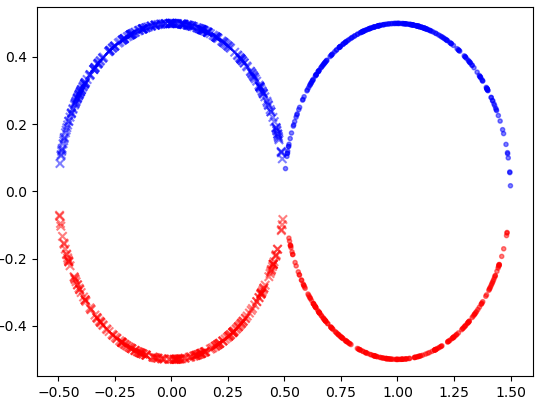}}
	\subfloat[$\alpha=0,\beta=0.5$]
	{\centering \includegraphics[scale=0.23]{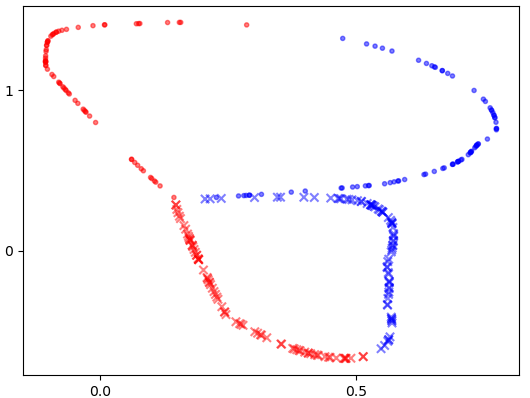}}
	\subfloat[$\alpha=0.1,\beta=0.5$]
	{\centering \includegraphics[scale=0.23]{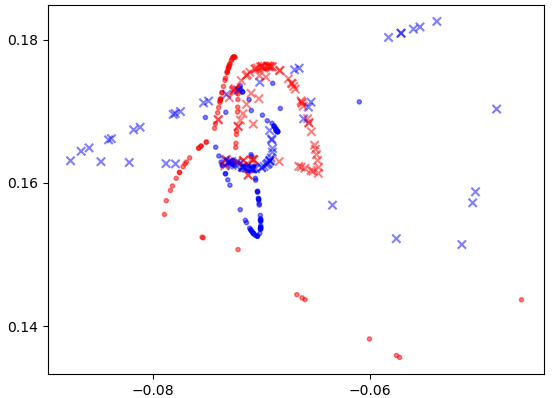}}
	\caption{2D representations learnt by ARPRL. (a) Raw data; (b) only robust representations (privacy acc: 99\%, robust acc: 88\%, test acc: 99\%); and (c) robust + privacy preserving representations (privacy acc: 55\%, robust acc: 75\%, test acc: 85\%). {\color{red} red} vs. {\color{blue} blue}: binary private attribute values;  cross $\times$ vs. circle $\circ$: binary task labels.} 
	\label{fig:toy2D}
\end{figure*}

\subsection{Theoretical Results
\footnote{\cite{zhao2020trade} also has theoretical results of privacy protection against attribute inference attacks. The differences between theirs and our theoretical results are discussed in the Appendix.}}

\noindent  {\bf Robustness vs. Representation Vulnerability.}
We first show the relationship between adversarial risk (or robustness) and representation vulnerability in ARPRL. 
\begin{restatable}[]{theorem}{urtradeoff}
\label{thm:urtradeoff}

Let all binary task classifiers be $\mathcal{C} = \{ C: \mathcal{Z} \rightarrow \mathcal{Y} \}$.
Then for any representation learner $f: \mathcal{X} \rightarrow \mathcal{Z}$, % we have 
{
\begin{align}
    \inf_{C \in \mathcal{C}} \mathrm{AdvRisk}_\epsilon (C \circ f) & \geq 
    \frac{\mathrm{RV}_\epsilon (f|u) - I({\bf x}; {\bf z} | u)}{\log 2}.
        \label{eqn:urtradeoff}
\end{align}
}
\end{restatable}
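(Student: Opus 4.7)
The plan is to mirror the argument used for Theorem~\ref{thm:rob_rep} from Zhu et al.\ (2020) while replacing every information-theoretic quantity by its conditional analogue given the private attribute $u$, and then specializing to binary labels so that $\log|\mathcal{Y}| = \log 2$.

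First I would fix an arbitrary $C \in \mathcal{C}$ and an arbitrary adversarial point ${\bf x}' \in \mathcal{B}_p({\bf x}, \epsilon)$; set ${\bf z}' = f({\bf x}')$, $\hat{y} = C({\bf z}')$, and $P_e = \mathrm{Pr}(\hat{y} \neq y)$. The two core ingredients would be (i) a \emph{conditional Fano inequality} $H(y \mid \hat{y}, u) \leq h_b(P_e) + P_e \log(|\mathcal{Y}| - 1)$, obtained by applying the standard Fano inequality inside each slice $\{u = u_0\}$ and averaging using the concavity of the binary entropy $h_b$; and (ii) a \emph{conditional data processing inequality} applied to the chain $y \to {\bf x}' \to {\bf z}' \to \hat{y}$, which is Markov given $u$ because ${\bf z}' = f({\bf x}')$ is deterministic. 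Part (ii) yields both $I(y; {\bf z}' \mid u) \leq I({\bf x}'; {\bf z}' \mid u)$ and $H(y \mid \hat{y}, u) \geq H(y \mid {\bf z}', u) = H(y \mid u) - I(y; {\bf z}' \mid u)$.

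Chaining (i) and (ii), using the bounds $h_b(P_e) \leq \log 2$ and $\log(|\mathcal{Y}| - 1) \leq \log |\mathcal{Y}|$, and invoking the implicit label-uniformity assumption $H(y \mid u) = \log |\mathcal{Y}|$ that underpins Theorem~\ref{thm:rob_rep}, I would obtain
\[
P_e \;\geq\; 1 - \frac{I({\bf x}'; {\bf z}' \mid u) + \log 2}{\log|\mathcal{Y}|}.
\]
Specializing to $|\mathcal{Y}| = 2$ collapses the leading $1$ against the additive $\log 2$, leaving $P_e \geq - I({\bf x}'; {\bf z}' \mid u)/\log 2$. Taking the supremum over ${\bf x}' \in \mathcal{B}_p({\bf x}, \epsilon)$ on both sides then lifts $P_e$ to $\mathrm{AdvRisk}_\epsilon(C \circ f)$ and turns the right-hand side into $-\inf_{{\bf x}'} I({\bf x}'; {\bf z}' \mid u)/\log 2$. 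By the definition $\mathrm{RV}_\epsilon(f \mid u) = \max_{{\bf x}'}\bigl[I({\bf x}; {\bf z} \mid u) - I({\bf x}'; {\bf z}' \mid u)\bigr]$, this equals $\bigl[\mathrm{RV}_\epsilon(f \mid u) - I({\bf x}; {\bf z} \mid u)\bigr]/\log 2$, matching the target. Because neither side depends on $C$ any more, taking $\inf_{C \in \mathcal{C}}$ preserves the bound.

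The main obstacle I anticipate is justifying the conditional information-theoretic inequalities carefully: confirming that the Markov structure $y \to {\bf x}' \to {\bf z}'$ survives conditioning on $u$ (which it does, since $u$ only affects the joint law of $({\bf x}', y)$ and not the deterministic map $f$, so ${\bf z}' \perp y \mid {\bf x}', u$), and checking that averaging Fano across slices of $u$ introduces no $u$-dependent alphabet correction. A secondary subtlety to handle explicitly is the label-uniformity assumption $H(y \mid u) = \log|\mathcal{Y}|$; without it an additional $\log|\mathcal{Y}| - H(y \mid u)$ term would appear in the numerator and alter the exact form of the stated bound.
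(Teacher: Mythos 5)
Your proposal is correct and takes essentially the same route as the paper: the paper's own proof is the one-liner ``replace $I({\bf x};{\bf z})$ and $\mathrm{RV}_\epsilon(f)$ in Theorem~\ref{thm:rob_rep} by their conditional-on-$u$ analogues and set $|\mathcal{Y}|=2$,'' and your argument is exactly that substitution with the underlying conditional Fano and data-processing steps spelled out, followed by the same algebraic collapse of the leading $1$ against $\log 2$. You are in fact more explicit than the paper in flagging the label-uniformity assumption $H(y\mid u)=\log|\mathcal{Y}|$, which the paper's proof silently inherits from Theorem~\ref{thm:rob_rep}.
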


\noindent \emph{Remark.} 
Similar to Theorem~\ref{thm:rob_rep}, Theorem~\ref{thm:urtradeoff} shows a smaller representation vulnerability implies a smaller lower bounded adversarial risk for robustness. 
In addition, a larger MI $I({\bf x}; {\bf z} | u)$ ({\bf Goal 2} for utility preservation) produces a smaller adversarial risk, implying better robustness. 

\noindent {\bf Utility vs. Privacy Tradeoff.}
The following theorem shows the tradeoff between utility and privacy: 
\begin{restatable}[]{theorem}{uptradeoff}
\label{thm:uptradeoff}
Let ${\bf z} = f({\bf x})$ be with a bounded norm $R$ 
(i.e., $\max_{{\bf z} \in \mathcal{Z}} \|{\bf z} \| \leq R$), and $\mathcal{A}$ be the set of all binary inference classifiers taking ${\bf z}$ as an input.
Assume the task classifier $C$ is $C_L$-Lipschitz,
i.e., $\|C\|_L \leq C_L$. Then, we have the below relationship between the standard risk and the advantage:  
{
\begin{align}
\label{eqn:uptradeoff}
    \mathrm{Risk}(C \circ f)
    & \geq \Delta_{y|u} - 2R \cdot C_L \cdot \textrm{Adv}_{\mathcal{D}}(\mathcal{A}),
\end{align}
}
where $\Delta_{y|u} = |\textrm{Pr}_{\mathcal{D}}(y=1|u=0) - \textrm{Pr}_{\mathcal{D}}(y=1|u=1)|$ is a  dataset-dependent constant. 
\end{restatable}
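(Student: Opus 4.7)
The plan is to start from the quantity $\Delta_{y|u}$ and insert the conditional means of the soft prediction via a triangle inequality, so that
\[
\Delta_{y|u} \leq |\eta_0 - \mu_0| + |\mu_0 - \mu_1| + |\eta_1 - \mu_1|,
\]
where $\eta_a = \Pr_{\mathcal{D}}(y{=}1 \mid u{=}a)$ and $\mu_a = \mathbb{E}_{\mathcal{D}}[C(f({\bf x})) \mid u{=}a]$ (treating $C$ as a $[0,1]$-valued soft classifier, which is what makes the Lipschitz assumption meaningful). The outer two terms will be absorbed into $\mathrm{Risk}(C \circ f)$, and the middle term will be controlled by the attribute inference advantage through a Lipschitz--Wasserstein argument.

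First I would bound the outer terms by Jensen's inequality: $|\eta_a - \mu_a| = |\mathbb{E}[y - C(f({\bf x})) \mid u{=}a]| \leq \mathbb{E}[|y - C(f({\bf x}))| \mid u{=}a]$. Averaging these against the marginal of $u$ produces $\mathrm{Risk}(C \circ f)$ up to the group weights $\Pr(u{=}a)$. For the middle term, Kantorovich--Rubinstein duality gives
\[
|\mu_0 - \mu_1| = \Bigl|\int C\, d\pi_0 - \int C\, d\pi_1\Bigr| \leq C_L \cdot W_1(\pi_0, \pi_1),
\]
where $\pi_a$ is the conditional law of ${\bf z} = f({\bf x})$ given $u{=}a$. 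Since ${\bf z}$ is supported in a ball of radius $R$ (so diameter at most $2R$), a standard coupling yields $W_1(\pi_0, \pi_1) \leq 2R \cdot d_{TV}(\pi_0, \pi_1)$.

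The last ingredient is the identification $d_{TV}(\pi_0, \pi_1) = \textrm{Adv}_{\mathcal{D}}(\mathcal{A})$ for binary $u$, which is exactly the variational characterization of total variation: the supremum of $|\pi_0(E) - \pi_1(E)|$ over measurable sets $E$ coincides with the supremum of $|\Pr(A({\bf z}){=}1 \mid u{=}0) - \Pr(A({\bf z}){=}1 \mid u{=}1)|$ over binary classifiers $A \in \mathcal{A}$, which is $\textrm{Adv}_{\mathcal{D}}(\mathcal{A})$. Plugging this chain of bounds back into the triangle inequality and rearranging gives the claimed $\mathrm{Risk}(C \circ f) \geq \Delta_{y|u} - 2R \cdot C_L \cdot \textrm{Adv}_{\mathcal{D}}(\mathcal{A})$.

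The main obstacle I anticipate is the bookkeeping in going from the unweighted sum $\mathbb{E}[|y - C(f)| \mid u{=}0] + \mathbb{E}[|y - C(f)| \mid u{=}1]$ produced by the triangle inequality to the unconditional $\mathrm{Risk}(C \circ f) = \sum_a \Pr(u{=}a) \mathbb{E}[|y - C(f)| \mid u{=}a]$: without further hypotheses this step is loose by a factor depending on $\Pr(u{=}0)\Pr(u{=}1)$, so matching the constants stated in the theorem likely requires reading $\mathrm{Risk}$ as a group-averaged (balanced) risk, or otherwise tracking a $\Pr(u)$-dependent weight explicitly. The other ingredients---Kantorovich--Rubinstein, the diameter--TV bound on $W_1$, and the $d_{TV} = \textrm{Adv}$ identity---are standard once this bookkeeping is settled.
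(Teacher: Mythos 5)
Your proposal follows essentially the same route as the paper's proof: a triangle inequality splitting $\Delta_{y|u}$ into two ``risk'' terms and one ``advantage'' term, Jensen's inequality for the outer terms, the Lipschitz/Kantorovich--Rubinstein contraction for the middle term, the diameter bound $W_1 \leq 2R\cdot d_{TV}$, and the identification $d_{TV}\bigl(f(\mathcal{D}_{u=0}), f(\mathcal{D}_{u=1})\bigr) = \textrm{Adv}_{\mathcal{D}}(\mathcal{A})$. The bookkeeping concern you raise at the end is legitimate but is not resolved in the paper either: the paper's proof simply asserts that the sum of the two group-conditional risks is at most $\mathrm{Risk}(C \circ f)$, which, as you observe, holds only if $\mathrm{Risk}$ is read as the sum of the conditional risks rather than the $\Pr(u{=}a)$-weighted average, so your version is no weaker than the published argument.
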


\noindent \emph{Remark.} 
Theorem~\ref{thm:uptradeoff} says that any task classifier using learnt representations  leaks attribute privacy: the smaller the advantage $\textrm{Adv}_{\mathcal{D}}(\mathcal{A})$ (meaning less  attribute privacy is leaked), the larger the  lower bound risk, and vice versa. Note that the lower bound is {independent} of the adversary, meaning it covers the \emph{worst-case}  attribute inference adversary. Hence, Equation (\ref{eqn:uptradeoff}) reflects an inherent tradeoff between utility preservation and attribute privacy leakage. 

\noindent {\bf Robustness vs. Privacy Tradeoff.}
Let ${\mathcal{D}'}$ be a joint distribution over the adversarially perturbed input ${\bf x}'$, sensitive attribute $u$, and label $y$. 
By assuming the representation space
is bounded by $R$, 
the perturbed representations also satisfy $\max_{{\bf z}' \in \mathcal{Z}} \|{\bf z}' \| \leq R$, where ${\bf z}' = f({\bf x}')$. 
Following Equation~\ref{eqn:adv}, we have an associated adversary \emph{advantage} $\textrm{Adv}_{\mathcal{D}'} (\mathcal{A})$ with respect to 
the joint distribution 
$\mathcal{D}'$. 
Similarly, $\textrm{Adv}_{\mathcal{D}'}(\mathcal{A}) =1$  means an adversary can \emph{completely} infer the privacy attribute $u$ through the learnt adversarially perturbed representations ${\bf z}'$, and $\textrm{Adv}_{\mathcal{D}'}(\mathcal{A}) =0$ implies an adversary only obtains a \emph{random guessing} inference performance.  
Then we have the following theorem: 

\begin{restatable}[]{theorem}{rptradeoff}
\label{thm:rptradeoff}
Let ${\bf z}' = f ({\bf x}')$ be the learnt representation for ${\bf x}' \in \mathcal{B}({\bf x}, \epsilon)$ with a bounded norm $R$ 
(i.e., $\max_{{\bf z}' \in \mathcal{Z}} \|{\bf z}' \| \leq R$), and $\mathcal{A}$ be the set of all binary inference classifiers. 
Under a $C_L$-Lipschitz task classifier $C$, we have the below relationship between the adversarial risk 
and the advantage: 
{
\begin{align}
\label{eqn:rptradeoff}
    \mathrm{AdvRisk}_\epsilon (C \circ f)
    & \geq \Delta_{y|u} - 2R \cdot C_L \cdot \textrm{Adv}_{\mathcal{D}'}(\mathcal{A}).
\end{align}
}
\end{restatable}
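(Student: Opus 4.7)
The plan is to derive Theorem~\ref{thm:rptradeoff} as the direct adversarial analogue of Theorem~\ref{thm:uptradeoff}: since the assumption $\max_{{\bf z}' \in \mathcal{Z}}\|{\bf z}'\|\leq R$ gives the perturbed representations the same bounded-norm property that ${\bf z}$ had in the non-adversarial case, and since $\mathrm{AdvRisk}_\epsilon(C\circ f)=\sup_{{\bf x}'\in\mathcal{B}_p({\bf x},\epsilon)}\Pr[C(f({\bf x}'))\neq y]$ is a supremum of standard-risk-like quantities on perturbed data, the entire argument for Theorem~\ref{thm:uptradeoff} should carry through verbatim after the substitutions ${\bf z}\mapsto{\bf z}'$, $\mathcal{D}\mapsto\mathcal{D}'$, and $\mathrm{Risk}\mapsto\mathrm{AdvRisk}_\epsilon$. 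I would explicitly flag this symmetry at the start to avoid duplicating algebra and then localize the new ingredients.

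Concretely, I would fix an arbitrary ${\bf x}'\in\mathcal{B}_p({\bf x},\epsilon)$, set $\hat{y}'=C(f({\bf x}'))$, and insert $\hat{y}'$ into the constant $\Delta_{y|u}$ by two triangle inequalities:
\begin{align*}
\Delta_{y|u}
&=|\mathbb{E}[y\mid u{=}0]-\mathbb{E}[y\mid u{=}1]|\\
&\leq|\mathbb{E}[\hat{y}'\mid u{=}0]-\mathbb{E}[\hat{y}'\mid u{=}1]|\\
&\quad+\sum_{a\in\{0,1\}}|\mathbb{E}[y-\hat{y}'\mid u{=}a]|.
\end{align*}
For the first (statistical) term, I would use the Lipschitz-to-total-variation chain: because $C$ is $C_L$-Lipschitz and ${\bf z}'$ is confined to a ball of radius $R$, the image $C({\bf z}')$ lies in an interval of width at most $2RC_L$, so after centering we get $|\mathbb{E}_{\mu_0}[C({\bf z}')]-\mathbb{E}_{\mu_1}[C({\bf z}')]|\leq 2RC_L\cdot\mathrm{TV}(\mu_0,\mu_1)$, where $\mu_a=p({\bf z}'\mid u{=}a)$. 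Since $\mathcal{A}$ is the class of \emph{all} binary classifiers on $\mathcal{Z}$, every measurable subset can be realized as $\{A=1\}$, and therefore $\textrm{Adv}_{\mathcal{D}'}(\mathcal{A})=\mathrm{TV}(\mu_0,\mu_1)$, giving the $2RC_L\cdot\textrm{Adv}_{\mathcal{D}'}(\mathcal{A})$ contribution.

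For the second (error) sum, I would bound $|\mathbb{E}[y-\hat{y}'\mid u{=}a]|\leq\Pr[\hat{y}'\neq y\mid u{=}a]$ (both variables being $\{0,1\}$-valued) and then convert the conditional error sum into the marginal $\Pr[\hat{y}'\neq y]$ using the same normalization step as in the proof of Theorem~\ref{thm:uptradeoff}. Because the resulting bound holds for \emph{every} ${\bf x}'\in\mathcal{B}_p({\bf x},\epsilon)$, passing to the supremum on the left-hand side replaces $\Pr[\hat{y}'\neq y]$ with $\mathrm{AdvRisk}_\epsilon(C\circ f)$ by the sup-definition in Equation~(\ref{eqn:advrisk}), and rearranging yields the stated inequality.

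The main obstacle, in my view, is the Lipschitz-bounded-range step together with the identification of $\textrm{Adv}_{\mathcal{D}'}(\mathcal{A})$ with a total variation distance: one must verify that (i) centering the Lipschitz classifier produces the sharp factor $2RC_L$ rather than a loose $4RC_L$, and (ii) the supremum in the definition of the advantage is actually attained (or at least approached) by indicator classifiers so that it coincides with $\mathrm{TV}(\mu_0,\mu_1)$ under the stated hypothesis on $\mathcal{A}$. Once these two points are settled, the remainder is a mechanical recycling of the uptradeoff argument, the only genuinely new ingredient being the assumption that adversarial representations inherit the radius-$R$ bound from clean representations.
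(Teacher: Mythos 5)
Your proposal is correct and follows essentially the same route as the paper's proof: the same triangle-inequality decomposition inserting the classifier's predictions between the two conditional label distributions, the same Lipschitz-plus-bounded-norm step yielding the factor $2RC_L\cdot d_{TV}$, the same identification of the total-variation distance with $\textrm{Adv}_{\mathcal{D}'}(\mathcal{A})$ via indicator classifiers, and the same bounding of the residual terms by the misclassification probability. The paper merely packages these steps as lemmas about the $1$-Wasserstein distance (contraction under Lipschitz maps, $W_1=$ difference of means for Bernoulli variables, $W_1\leq 2R\,d_{TV}$ on a norm-bounded space), which for binary-valued quantities coincide exactly with your expectation-based phrasing.
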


\noindent \emph{Remark.} 
Similar to Theorem~\ref{thm:uptradeoff},  Theorem~\ref{thm:rptradeoff} states that, any task classifier using adversarially learnt representations has to incur an adversarial risk on at least a private attribute value. 
Moreover, the lower bound  covers the \emph{worst-case} adversary. Equation (\ref{eqn:rptradeoff})  hence reflects an inherent trade-off between adversarial robustness and privacy. 

\noindent {\bf Guaranteed Attribute Privacy Leakage.}
The attribute inference accuracy induced by the worst-case adversary is bounded in the following theorem:

\begin{restatable}[]{theorem}{provprivacy}
\label{thm:provprivacy}
Let ${\bf z}$ be the learnt  representation 
by  Equation (\ref{eq:final-obj}). 
For any attribute inference adversary $\mathcal{A}= \{A: 
\mathcal{Z} \rightarrow \mathcal{U} = \{0,1\} \}$, 
$\textrm{Pr}({A}({\bf z}) = u) \leq 1 - \frac{H(u | {\bf z})}{2 \log_2 ({6}/{H (u | {\bf z}}))}$. 
\end{restatable}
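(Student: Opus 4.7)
My plan is to combine a data-processing step with the binary Fano inequality and an explicit quantitative lower bound on the inverse of the binary entropy function $H_b$. Since the statement is purely information-theoretic in $u$, ${\bf z}$, and $A({\bf z})$, the specific training procedure of Equation~(\ref{eq:final-obj}) plays no direct role beyond producing a representation ${\bf z}$ with some conditional entropy $H(u\mid{\bf z})$.

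First, I would observe that any (possibly randomized) attribute inference classifier $A: \mathcal{Z} \to \{0,1\}$ induces the Markov chain $u \to {\bf z} \to A({\bf z})$. The data processing inequality then gives $I(u; A({\bf z})) \leq I(u; {\bf z})$, or equivalently $H(u \mid {\bf z}) \leq H(u \mid A({\bf z}))$, so any lower bound on the adversary's error probability expressed through $H(u \mid A({\bf z}))$ automatically transfers to the (smaller) quantity $H(u \mid {\bf z})$ appearing in the statement.

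Next, I would apply Fano's inequality. Writing $P_e = \textrm{Pr}(A({\bf z}) \neq u)$, the binary case $|\mathcal{U}|=2$ collapses the usual Fano bound to $H(u \mid A({\bf z})) \leq H_b(P_e)$, so chaining with the previous step yields $H(u \mid {\bf z}) \leq H_b(P_e)$. Since $H_b$ is symmetric about $1/2$, we may assume $P_e \leq 1/2$ (replacing $A$ with its complement otherwise only improves the adversary), and monotonicity of $H_b$ on $[0,1/2]$ gives $P_e \geq H_b^{-1}(H(u \mid {\bf z}))$.

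The main obstacle, and the source of the somewhat peculiar $\log_2(6/H(u\mid{\bf z}))$ factor, is inverting $H_b$ in closed form to match the statement. I plan to invoke the standard quantitative inequality
\[
H_b^{-1}(h) \;\geq\; \frac{h}{2 \log_2(6/h)}, \qquad h \in (0,1],
\]
which can be verified by checking $H_b\bigl(\tfrac{h}{2\log_2(6/h)}\bigr) \leq h$ on $(0,1]$ via the elementary estimate $H_b(p) \leq p \log_2(1/p) + p\log_2 e$ for $p \in (0,1/2]$ and a monotonicity argument on the two sides. Substituting $h = H(u\mid{\bf z})$ into $P_e \geq H_b^{-1}(h)$ and using $\textrm{Pr}(A({\bf z}) = u) = 1 - P_e$ gives the claimed upper bound. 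If pinning down the constant $6$ directly proves fiddly, I would fall back on citing this form of the inverse-entropy inequality from the information-theory literature and close the argument by rearrangement.
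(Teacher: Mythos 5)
Your proposal is correct and takes essentially the same route as the paper's proof: the paper's indicator-variable double decomposition of $H(s,u \mid A({\bf z}))$ is precisely the standard derivation of the binary Fano inequality you invoke directly, and it then applies the same data-processing inequality on $u \rightarrow {\bf z} \rightarrow A({\bf z})$ and the same inverse-binary-entropy bound $H_2^{-1}(p) \geq p/(2\log_2(6/p))$, which the paper cites from Calabro (2009, Theorem 2.2) rather than re-deriving.
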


\noindent \emph{Remark.} Theorem~\ref{thm:provprivacy} shows when $ H(u | {\bf z})$ is larger, the inference accuracy induced by any adversary is smaller, i.e., less attribute privacy leakage. 
From another perspective, as $H(u | {\bf z}) = H(u) - I(u;  {\bf z})$, achieving the largest $H(u | {\bf z})$ implies minimizing $I(u;  {\bf z})$ (note that $H(u)$ is a constant)---This is exactly our {\bf Goal 1} aims to achieve. 

\begin{figure*}[t]
	\centering
 \subfloat[{CelebA (Gender): R.}]
	{\centering\includegraphics[scale=0.2]{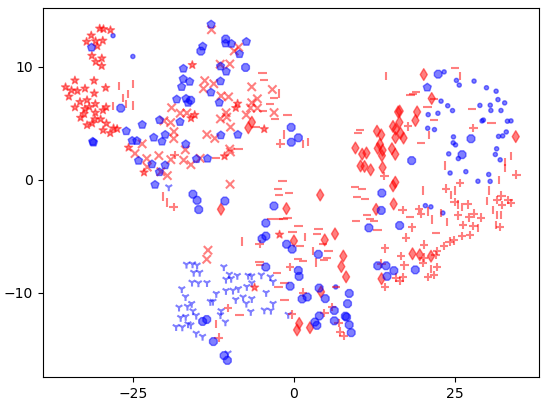}}
 \,
	\subfloat[{CelebA (Gender): R.+P.P.}]
	{\centering \includegraphics[scale=0.2]{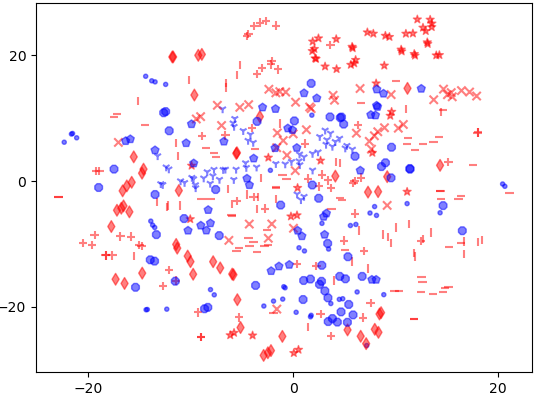}}
\,
 \subfloat[{Loans (Race): R.}]
	{\centering\includegraphics[scale=0.2]{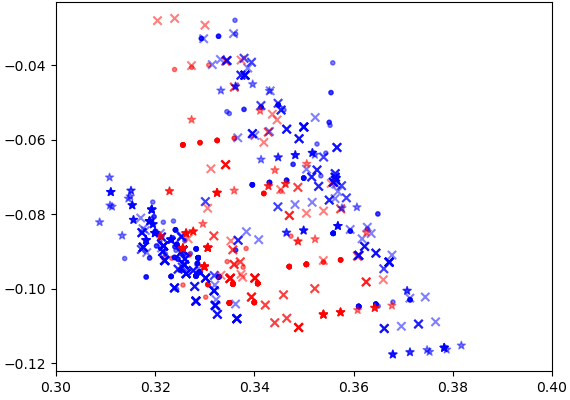}}
 \,
	\subfloat[{Loans (Race):R.+P.P.}]
	{\centering \includegraphics[scale=0.2]{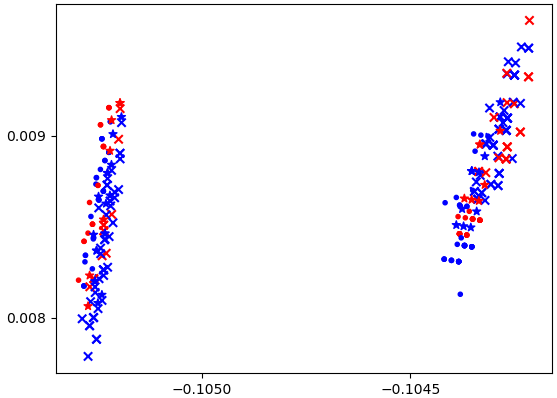}}  
\,	
\\
 \subfloat[{Adult (Gender): R.}]
	{\centering\includegraphics[scale=0.2]{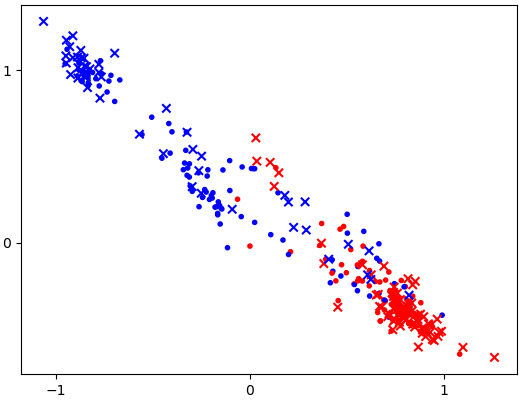}}
\,	
 \subfloat[{Adult (Gender): R.+P.P.}]
	{\centering \includegraphics[scale=0.2]{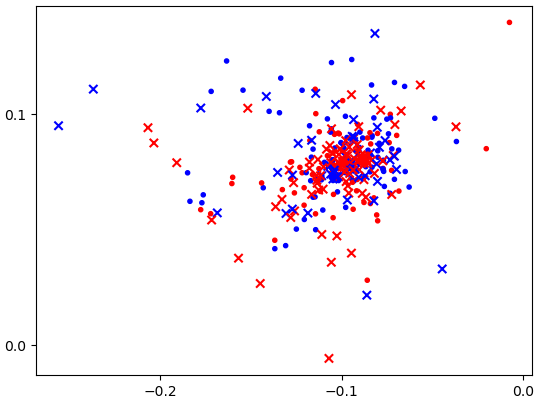}}
\quad  
	\subfloat[{Adult (Marital): R.}]
	{\centering\includegraphics[scale=0.2]{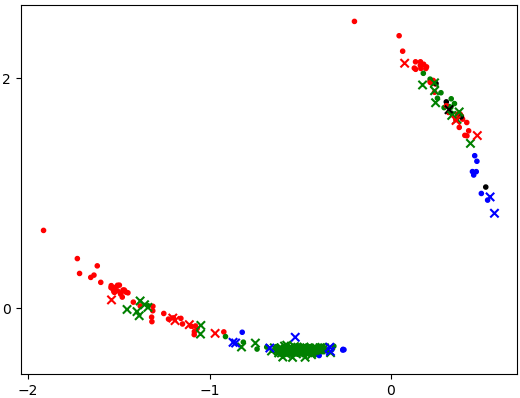}}
\quad
	\subfloat[{Adult (Marital): R.+P.P.}]
	{\centering \includegraphics[scale=0.2]{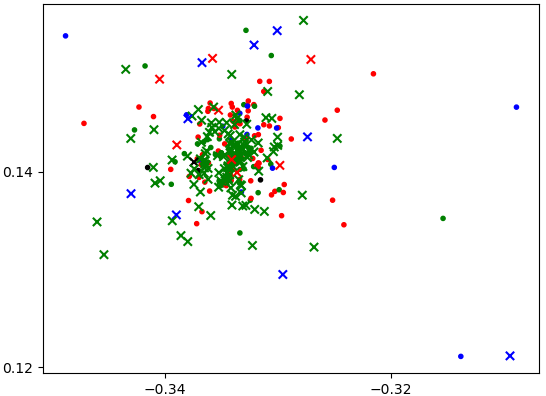}}
	\caption{2D t-SNE representations learnt by AdvPPRL. \emph{Left:} only robust representations; \emph{Right:} robust + privacy preserving representations (under the best tradeoff in Table~\ref{tab:allresults}). Colors indicate attribute values, while point patterns mean labels.} 
	\label{fig:TSNE_real}
\end{figure*}

\section{Evaluations}
\label{sec:eval}
We evaluate ARPRL on both synthetic and real-world datasets. The results on the synthetic dataset is for visualization and verifying the tradeoff purpose. 

\subsection{Experimental Setup}
We train the neural networks via Stochastic Gradient Descent (SGD), where the batch size is 100 and we use 10 local epochs and 50 global epochs in all datasets. The learning rate in SGD is set to be $1e^{-3}$. The detailed network architecture is shown in Table 2 in Appendix D
The hyperparameters used in the adversarially robust network are following \cite{zhu2020learning}. We also discuss how to choose the hyperparameters $\alpha$ and $\beta$ in real-world datasets in Appendix D. W.l.o.g, we consider the most powerful $l_\infty$ perturbation. 
Following~\cite{zhu2020learning}, we use the PGD attack for both generating adversarial perturbations in the estimation of worst-case MI and evaluating model robustness\footnote{
Note our goal is not to design the best adversarial attack, i.e., generating the optimal adversarial perturbation. Hence, the achieved adversarial robustness might not the optimal. 
We also test CelebA against the CW attack~\cite{CarliniSP17}, and the robust accuracy is 85\%, which close to 87\% with the PGD attack.
}. 
We implement ARPRL in PyTorch and use the NSF Chameleon Cloud GPUs \cite{keahey2020lessons} ({CentOS7-CUDA 11 with Nvidia Rtx 6000}) 
to train the model. 
We evaluate ARPRL on three metrics: utility preservation, adversarial robustness, and privacy protection. 

\subsection{Results on A Toy Example}
\label{sec:toy_exp}
We generate 2 2D circles with the center (0, 0) and (1, 0) respectively and radius 0.25, and data points are on the circumference. 
Each circle indicates a class and has 5,000 samples, where $80\%$ of the samples are for training and the rest $20\%$ for testing. We define the binary 
% private 
attribute value for each data as whether its $y$-value is above/below the $x$-axis. The network architecture is shown in Table D %~\ref{app:arch} in  Appendix. 
in  Appendix. 
 We use an $l_\infty$ perturbation budget $\epsilon=0.01$ and 10 PGD attack steps with
step size 0.1. 
We visualize the learnt representations via 2D t-SNE~\cite{van2008visualizing} in Figure~\ref{fig:toy2D}. 

We can see that: by learning  \emph{only robust} representations, the 2-class data can be well separated, but their private attribute values can be also completely separated--almost $100\%$ privacy leakage. 
In contrast, by learning both \emph{robust and privacy-preserving} representations,  
the 2-class data can be separated, but their private attributes are mixed---only 55\% inference accuracy. Note that the optimal random guessing inference accuracy is 50\%. 
We also notice a tradeoff among robustness/utility and attribute privacy, as demonstrated in our theorems. That is, a more robust/accurate model leaks more attribute privacy, and vice versa. 

\subsection{Results on the Real-World Datasets}
\label{sec:eval_real}
\noindent {\bf Datasets and setup.} We use three real-world datasets from different applications, i.e., the 
widely-used CelebA~\citep{liu2015deep} image dataset 
(150K training images and 50K for testing),
the Loans \cite{10.5555/3157382.3157469}, and Adult Income~\cite{Dua:2019} datasets 
In CelebA, we treat binary `gender' as the private attribute, and detect `gray hair' as the primary (binary classification) task, following \cite{li2021deepobfuscator,osia2018deep}.
For the Loans dataset, the primary task is to predict the affordability of the person asking for the loan while protecting their race. 
 For the Adult Income dataset, predicting whether the income of a person is above \$50K or not is the primary task. The private attributes are the gender and marital status.   
For $l_\infty$ perturbations, we set the budget $\epsilon$ = 0.01 on Loans and Adults, and 0.1 on 
CelebA. We use 10 PGD attack steps with step size 0.1. 

\noindent {\bf Results.} Tables~\ref{tab:allresults} shows the results 
on the three datasets, where we report the robust accuracy (under the $l_\infty$ attack), normal test accuracy, and  attribute inference accuracy (as well as the gap to random guessing). We have the following observations: 1) When $\alpha=0$, it means ARPRL only focuses on learning robust representation  (similar  to~\cite{zhu2020learning}) and obtains the best robust accuracy. However, the inference accuracy is rather high, indicating a serious privacy leakage. 
2) Increasing $\alpha$ can progressively better protect the attribute privacy, i.e., the inference accuracy is gradually reduced and finally close to random guessing (note different datasets have different random guessing value). 
3) $\alpha$ and $\beta$ together act as the tradeoff among robustness, utility, and privacy. Particularly, a better privacy protection (i.e., larger $\alpha$) implies a smaller test accuracy, indicating an utility and privacy tradeoff, as validated in Theorem~\ref{thm:uptradeoff}. Similarly,  a better privacy protection also implies a 
smaller robust accuracy, indicating a robustness and privacy tradeoff, as validated in Theorem~\ref{thm:rptradeoff}.

\noindent {\bf Visualization.}  We further visualize the learnt representations via t-SNE in Figure~\ref{fig:TSNE_real}. We can see that: When only focusing on learning robust representations, both the data with different labels and with different attribute values can be well separated.   
On the other hand, when learning both robust and privacy-preserving representations, the data with different labels can be separately, but they are mixed in term of the  attribute values---meaning the privacy of attribute values is protected to some extent. 

\noindent {\bf Runtime.} We only show runtime on the largest CelebA (150K training images). In our used platform, it took about 5 mins each epoch (about 15 hours in total) to learn the robust and privacy-preserving representation for each hyperparameter setting. The computational bottleneck is mainly from training robust representations (where we adapt the source code from \cite{zhu2020learning}), which occupies 60\% of the training time (e.g., 3 mins out of 5 mins in each epoch). Training the other neural networks is much faster. 
\begin{table*}[!tbh]
\centering
   \caption{Test accuracy, robust accuracy, vs. inference accuracy (and gap w.r.t. the optimal random guessing) on the considered three datasets and private attributes. 
   Note that some datasets are unbalanced, so the random guessing values are different.  
   Larger $\alpha$ means more privacy protection, while larger $\beta$ means more robust against adversarial perturbation.  
   $\alpha=0$ means no privacy protection and only focuses on robust representation learning, same as~\cite{zhu2020learning,zhou2022improving}.}
\begin{minipage}{.49\linewidth}
\centering
    \begin{tabular}{l|c|c|c|c}
      %\hline
           \multicolumn{5}{c}{\bf CelebA}\\
          \hline
           \multicolumn{5}{c}{Private attr.: Gender (binary), budget $\epsilon=0.1$}\\
              \hline
     \textbf{$\alpha$} & \textbf{$\beta$} & {Rob. Acc} & {Test Acc} & {Infer. Acc (gap) } \\
     \hline
     0 & 0.50 &  0.87 &  0.91 & 0.81 (0.31)\\
     0.1 & 0.45 & 0.84 & 0.88 & 0.75 (0.25)\\
     0.5 & 0.25 & 0.79 & 0.85 & 0.62 (0.12)\\
     0.9 & 0.05 & 0.71 & 0.81 & 0.57 (0.07)\\
   \end{tabular} 
   \end{minipage}
    \begin{minipage}{.49\linewidth}
    \centering
 \begin{tabular}{l|c|c|c|c}
     \multicolumn{5}{c}{\bf Loans}\\
      \hline
      \multicolumn{5}{c}{Private attr.: Race (binary), budget $\epsilon=0.01$}\\
     \hline
     \textbf{$\alpha$} & \textbf{$\beta$} & {Rob. Acc} & {Test Acc} & {Infer. Acc (gap) } \\
     \hline
     0 & 0.50 & 0.45 & 0.74 & 0.92 (0.22) \\
     0.05 & 0.475 & 0.42 & 0.69 & 0.75 (0.05)\\
     0.10 & 0.45  & 0.40 & 0.68 & 0.72 (0.02)\\
     0.15 & 0.425 & 0.39 &  0.66 & 0.71 (0.01)\\
   \end{tabular}
     \end{minipage}
     \begin{minipage}{.49\linewidth}
     \centering
     \begin{tabular}{l|c|c|c|c}
     \hline
     \multicolumn{5}{c}{\bf Adult income}\\
   \hline
      \textbf{$\alpha$} & \textbf{$\beta$} & {Rob. Acc} & {Test Acc} & {Infer. Acc (gap) } \\
     \hline
     \multicolumn{5}{c}{Private attr.: Gender (binary), budget $\epsilon=0.01$}\\
     \hline
     0 & 0.5 & 0.63 &   0.68 & 0.88 (0.33)\\
     0.05 & 0.475 & 0.57 &   0.67 & 0.72 (0.17)\\
     0.10 & 0.45 & 0.55  &   0.65 & 0.59 (0.04) \\
     0.20 & 0.4 & 0.53  & 0.63 & 0.55 (0.00)\\
     \hline
   \end{tabular}
   \end{minipage}
   \begin{minipage}{.49\linewidth}
   \centering
     \begin{tabular}{l|c|c|c|c}
     \hline
     \multicolumn{5}{c}{\bf Adult income}\\
   \hline
      \textbf{$\alpha$} & \textbf{$\beta$} & {Rob. Acc} & {Test Acc} & {Infer. Acc (gap) } \\
     \hline
     \multicolumn{5}{c}{Private attr.: Marital status (7 values), budget $\epsilon=0.01$}\\
     \hline
          0 & 0.5 &  0.56 &   0.71 & 0.70 (0.14)\\
     0.001 & 0.495 &   0.55 &   0.65 & 0.60 (0.04)\\
     0.005 & 0.49 &  0.52 &   0.60 & 0.59 (0.03)\\
     0.01 & 0.45 & 0.47 &   0.59 & 0.57 (0.01)\\
     \hline
   \end{tabular}
   \end{minipage}
   \label{tab:allresults}
\end{table*}
\subsection{Comparing with the State-of-the-arts}
{\bf Comparing with task-known privacy-protection baselines.} 
We compare ARPRL with two recent task-known methods for attribute privacy protection on CelebA: {\bf DPFE} \cite{osia2018deep} that also uses mutual information (but in different ways), and {\bf Deepobfuscator} \cite{li2021deepobfuscator}\footnote{We observe that a most recent work \cite{pmlr-v216-jeong23a} has similar performance as Deepobfuscator, but 2 orders of memory consumption. We do not include their results for conciseness.}, an adversarial training based defense. We align three methods with the same test accuracy 0.88, and compare attribute inference accuracy. \emph{For fair comparison, we do not consider adversarial robustness in our ARPRL.} The attribute inference accuracy of {DPFE}  and {Deepobfuscator} are 0.79 and 0.70, respectively, and our ARPRL’s is 0.71.
First, {DPFE} performs much worse because it assumes the distribution of the learnt representation to be Gaussian (which could be inaccurate), while {Deepobfuscator} and {ARPRL} have no assumption on the distributions; Second, 
{Deepobfuscator}  performs slightly better than ARPRL. This is
because both ARPRL and {Deepobfuscator} involve adversarial training, {Deepobfuscator} uses task labels, but ARPRL is task-agnostic, hence slightly sacrificing privacy. 

\noindent {\bf Comparing with task-known adversarial robustness baselines.}
We compare ARPRL with the state-of-the-art task-known adversarial training based TRADES~\cite{zhang2019theoretically} and test on CelebA, under the same adversarial perturbation and without privacy-protection (i.e., $\alpha=0$). 
For task-agnostic ARPRL, its robust accuracy is 0.87, which is slightly worse than TRADES's is 0.89. 
However, when ARPRL also includes task labels during training, its robust accuracy increases to 0.91---This again verifies that adversarially robust representations based defenses outperform the classic adversarial training based method. 

\noindent {\bf Comparing with task-known TRADES + Deepobfuscator for both robustness and privacy protection.} A natural solution to achieve both robustness and privacy protection is by combining SOTAs that are individually adversarially robust or privacy-preserving. We test TRADES + Deepobfuscator on CelebA. Tuning the tradeoff hyperparameters, we obtain the best utility, privacy, and robustness tradeoff at: (Robust Acc, Test Acc, Infer. Acc) = (0.79, 0.84, 0.65) while the best tradeoff of ARPRL in Table~\ref{tab:allresults} is (Robust Acc, Test Acc, Inference Acc) = (0.79, 0.85, 0.62), which is slightly better than TRADES + Deepobfuscator, though they both know the task labels. The results imply that simply combining SOTA robust and privacy-preserving methods is not the best option. Instead, our ARPRL learns both robust and privacy-preserving representations under the same information-theoretic framework.

\section{Related Work}
\label{sec:related}
\noindent {\bf Defenses against adversarial examples.} Many efforts have been made to improve the adversarial robustness of ML models against adversarial examples~\citep{kurakin2017adversarial,pang2019improving,zhang2019theoretically,wong2018provable,mao2019metric,cohen2019certified,wang2019improving,dong2020adversarial,lecuyer2019certified,zhai2020macer,wong2020fast,zhou2021towards,hong2022unicr,zhang2024text}.
Among them, adversarial training based defenses~\citep{madry2018towards,dong2020adversarial} has become the state-of-the-art.
At a high level, adversarial training augments training data with adversarial examples, e.g., via CW attack~\citep{CarliniSP17}, PGD attack~\citep{madry2018towards}, AutoAttack~\citep{croce2020reliable}), and uses a min-max formulation to train the target ML model~\citep{madry2018towards}. 
However, as pointed out by~\citep{zhu2020learning,zhou2022improving}, the dependence between the output of the target model and the input/adversarial examples has not been well studied, making the ability of adversarial training not fully exploited. To improve it, they 
propose to learn adversarially-robust representations via mutual information, which is shown to outperform the state-of-the-art adversarial training based defenses.  
Our ARPRL is inspired by them while having a nontrivial generalization to learn both robust and privacy-preserving representations with guarantees. 

\noindent {\bf Defenses against inference attacks.}
Existing defense methods against inference attacks can be roughly classified as 
\textit{adversarial learning}~\citep{oh2017adversarial,wu2018towards,pittaluga2019learning,liu2019privacy,pmlr-v216-jeong23a,xu2022neuguard}, 
\textit{differential privacy}~\citep{shokri2015privacy,abadi2016deep,feng2024universally}, 
and \textit{information obfuscation}~\citep{bertran2019adversarially,hamm2017minimax,osia2018deep,roy2019mitigating,zhao2020trade,azam2022can,xie2022differentially,varun2024towards}. 
Adversarial learning methods are inspired by GAN~\citep{goodfellow2014generative} and they learn features from training data so that their private information cannot be inferred from a learnt model. However, these methods need to know the primary task and lack of formal privacy guarantees.
Differential privacy methods have formal privacy guarantees, but they have high utility losses.
Information obfuscation methods aim to maximize the utility, under the constraint of bounding the information leakage, but almost all of them are empirical and task-dependent. 
{The only exception is \citep{zhao2020trade}, which has guaranteed information leakage.} However, this works requires stronger assumptions (e.g., conditional independence assumption between variables).  
Our work can be seen as a combination of information obfuscation with adversarial learning. 
%to learn both robust and privacy-preserving representations. 
It also offers privacy leakage guarantees and inherent trade-offs between robustness/utility and privacy.

\noindent {\bf Information-theoretical representation learning against inference attacks.} \citet{wang2021privacy} propose to use mutual information to learn privacy-preserving representation on graphs against node (or link) inference attacks, while keeping the primary link (or node) prediction performance. \citet{arevalo2024task} learns task-agnostic privacy-preserving representations for federated learning against attribute inference attacks with privacy guarantees. Further, \citet{noorbakhsh2024inf2guard} developed an information-theoretic framework (called Inf$^2$Guard) to defend against common inference attacks including membership inference, property inference, and data reconstruction, while offering privacy guarantees. 

\section{Conclusion}
\label{sec:conclusion}

We develop machine learning models to be robust against adversarial examples and protect sensitive attributes in training data. 
We achieve the goal by proposing ARPRL, which learns adversarially robust,  privacy preserving, and utility preservation representations formulated via mutual information. 
We also derive theoretical results that show the inherent tradeoff between robustness/utility and privacy and guarantees of attribute privacy against the worst-case
adversary.

\section*{\bf Acknowledgements} We sincerely thank the anonymous reviewers for their constructive feedback. 
This research was partially supported by the Cisco Research Award and the National Science Foundation under grant Nos. ECCS-2216926, CNS-2241713, CNS-2331302, CNS-2339686, CNS-2302689, and CNS-2308730, CNS-2319277, and CMMI-2326341.

\bibliography{ref}

\newpage
\appendix

\section{Algorithm 1}
\begin{algorithm}
\small
\caption{Adversarially robust and privacy-preserving representation learning ({\bf ARPRL)}}
\label{alg:ARPRL}
\begin{flushleft}
{\bf Input:}  A dataset $\mathcal{D}=\{{\bf x}_i, y_i, u_i\}$, perturbation budget $\epsilon$, 
$\alpha, \beta \in [0,1]$, $\lambda>0$, 
learning rates $lr_1, lr_2, lr_3, lr_4, lr_5$; \\\#global epochs $I$, \#local gradient steps $J$;

{\bf Output:} Representation learner parameters $\Theta$.
\end{flushleft}

\begin{algorithmic}[1]
\STATE Initialize $\Theta,\Psi, \Omega, \Lambda$ for the representation learner $f_\Theta$, 
 privacy protection network $g_{\Phi}$, utility preservation network $h_\Omega$, and adversarially robust network $t_\Lambda$; 

\FOR{$t=0; t < T; t++$}
    \STATE $L_1 = \sum\nolimits_{i} CE(u_i,g_{\Psi}(f({\bf x}_i)))$;
    \STATE $L_2 = \frac{1}{|\mathcal{D}|} \sum_{i} t_{\Lambda}({\bf x}_i, f_\Theta({\bf x}_i), u_i) - \log [\frac{1}{|\mathcal{D}|} \sum_{i} e^{t_{\Lambda}({\bar{\bf x}}_i, f_\Theta({\bf x}_i), u_i)}]$;
    \STATE $L_3 = I^{(JS)}_{\Theta,\Omega} ({\bf x}; f({\bf x}),u)$;
    \FOR{$i=0; i < I; i++$}
        %\STATE $j=0$;
        \FOR{$j=0; j < J; j++$}
        \STATE $\Psi\leftarrow \Psi - lr_1 \cdot 
        \frac{\partial L_1}{\partial\Psi}$; 
        \STATE $\Lambda \leftarrow \Lambda + lr_2 \cdot \frac{\partial L_2}{\partial\Lambda}$;
        \STATE $\Omega\leftarrow \Omega + lr_3 \cdot \frac{\partial L_3}{\partial \Omega}$;
         \ENDFOR
        \STATE $\Theta\leftarrow \Theta + lr_4 \cdot \frac{\partial (\alpha L_1 + \beta L_2 + (1-\alpha-\beta) L_3)
        }{\partial \Theta} $;
    \ENDFOR
\ENDFOR \\
\end{algorithmic}
\end{algorithm}

\section{Derivation Details in Section 3}
\label{app:derivations}

\noindent {\bf Deriving a upper bound of $I({\bf z};u)$:} By applying the vCLUB bound proposed in \citep{cheng2020club}, we have: 
\begin{align}
& I({\bf z};u) 
 \leq I_{vCLUB}({\bf z};u) \nonumber \\
& = \mathbb{E}_{p({\bf z}, u)} [\log q_{\Psi}(u|{\bf z}) ] - \mathbb{E}_{ p({\bf z}) p(u)} [\log q_{\Psi}(u|{\bf z}) ], 
\end{align}
where 
$q_{\Psi}(u | {\bf z})$ is an auxiliary posterior distribution of $p(u | {\bf z})$. and it needs to satisfy the condition: 
$$KL (p({\bf z}, u) ||  q_{\Psi} ({\bf z}, u)) \leq KL (p({\bf z})p(u) || q_{\Psi} ({\bf z}, u)).$$
To achieve this, we need to minimize:
\begin{align}
    & \min_{\Psi}  KL (p({\bf z}, u) ||  q_{\Psi} ({\bf z}, u)) \nonumber \\ 
    &= \min_{\Psi}  KL (p(u | {\bf z}) ||  q_{\Psi} (u | {\bf z})) \notag \\
    % &= \min_{\Psi} \mathbb{E}_{    p({\bf z}, u)} [\log (p(u| {\bf z}) p({\bf z})) - \log q_{\Psi}(u| {\bf z}) p({\bf z})] \notag \\
    &= \min_{\Psi}  \mathbb{E}_{    p({\bf z}, u)} [\log p(u| {\bf z})] -  \mathbb{E}_{    p({\bf z}, u)} [\log q_{\Psi}(u| {\bf z}))] \notag \\ 
    & \Longleftrightarrow \max_{\Psi}  \mathbb{E}_{    p({\bf z}, u)} [\log q_{\Psi}(u| {\bf z})],  \label{Eqn:KL_conv} 
\end{align}
where we use that 
$\mathbb{E}_{    p({\bf z}, u)} [\log p(u| {\bf z})]$ 
is irrelevant to $\Psi$.   

Hence, minimizing $I_{vCLUB}({\bf z};u)$ reduces to maximizing $\mathbb{E}_{p({\bf z}, u)} [\log q_{\Psi}(u| {\bf z})]$. 

\noindent {\bf Deriving a lower bound of $I({\bf x} ; {\bf z} |u ) $:} From~\cite{nowozin2016f}, we know 
\begin{align}
    I({\bf x} ; {\bf z} |u ) 
    & = H({\bf x} | u) - H({\bf x} | {\bf z},u) \notag \\ 
    & = H({\bf x} | u) + \mathbb{E}_{p({\bf x} , {\bf z}, u)} [\log p({\bf x} | {\bf z}, u))] \notag \\
    & = H({\bf x} | u) + \mathbb{E}_{p({\bf x} , {\bf z}, u)} [\log q_\Omega ({\bf x} | {\bf z}, u))] \notag \\
    & \qquad + \mathbb{E}_{p({\bf x} , {\bf z}, u)} [KL(p(\cdot|{\bf z}, u) || q_\Omega (\cdot | {\bf z}, u))]  \notag \\
    & \geq H({\bf x} | u) + \mathbb{E}_{p({\bf x} , {\bf z}, u)} [\log q_\Omega ({\bf x} | {\bf z}, u)],  
\end{align}
where $q_{\Omega}$ is an \emph{arbitrary} auxiliary posterior distribution that aims to maintain the information ${\bf x}$ in the representation ${\bf z}$ conditioned on the private $u$.

\section{Proofs}
\label{app:proofs}

\subsection{Proof of Theorem \ref{thm:urtradeoff}}
\label{supp:urtradeoff}

\urtradeoff*

\begin{proof}
Replacing $I({\bf x}; {\bf z})$ and  $\mathrm{RV}_\epsilon (f)$ in Theorem~\ref{thm:rob_rep} with $I({\bf x}; {\bf z}|u)$ and $\mathrm{RV}_\epsilon (f|u)$, and setting $|\mathcal{Y}|=2$, we reach Theorem~\ref{thm:urtradeoff}.  
\end{proof}

\subsection{Proof of Theorem \ref{thm:uptradeoff}}
\label{supp:uptradeoff}

We first introduce the following definitions and lemmas that will be used to prove Theorem~\ref{thm:uptradeoff}.

\begin{definition}[Lipschitz function and Lipschitz norm]
We say a function $f: A \rightarrow \mathbb{R}^m$ is $L$-Lipschitz continuous, if for any $a, b \in A$, $\|f(a)-f(b)\| \leq L \cdot \|a-b\|$. Lipschitz norm of $f$, i.e., $\|f\|_L$, is defined as $\|f\|_L = \max \frac{\|f(a)-f(b)\|_L}{\|a-b\|_L}$.
\end{definition}

\begin{definition}[Total variance (TV) distance]
\label{def:TV}
Let $\mathcal{D}_1$ and $\mathcal{D}_2$ be two distributions over the same sample space $\Gamma$,  the TV distance between $\mathcal{D}_1$ and $\mathcal{D}_2$ is defined as: $d_{TV}(\mathcal{D}_1, \mathcal{D}_2) =  \max_{E \subseteq \Gamma} |\mathcal{D}_1 (E) - \mathcal{D}_2(E)|$.
\end{definition} 

\begin{definition}[1-Wasserstein distance]
\label{def:wassdis}
Let $\mathcal{D}_1$ and $\mathcal{D}_2$ be two distributions over the same sample space $\Gamma$, the 1-Wasserstein distance between $\mathcal{D}_1$ and $\mathcal{D}_2$ is defined as $W_1(\mathcal{D}_1, \mathcal{D}_2) = \max_{\|f\|_L \leq 1} |\int_{\Gamma} f d\mathcal{D}_1 - \int_{\Gamma} f d\mathcal{D}_2 |$, where $\|\cdot\|_L$  is the Lipschitz norm of a real-valued function.
\end{definition}

\begin{definition}[Pushforward distribution]
\label{def:pushforDist}
Let $\mathcal{D}$ be a distribution over a sample space and $g$ be a function of the same space. Then we call $g(\mathcal{D})$ the pushforward distribution of $\mathcal{D}$.
\end{definition} 

\begin{lemma}[Contraction of the 1-Wasserstein distance]
\label{lem:contWass}
Let $g$ be a function defined on a space and $L$ be constant such that $\|g\|_L \leq C_L$. 
For any distributions $\mathcal{D}_1$ and $\mathcal{D}_2$ over this space, %$\Gamma$, 
$ W_1(g(\mathcal{D}_1), g(\mathcal{D}_2))  \leq C_L \cdot W_1(\mathcal{D}_1, \mathcal{D}_2)$.
\end{lemma}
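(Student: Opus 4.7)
The plan is to unwind the dual (Kantorovich--Rubinstein) definition of $W_1$ given in Definition~\ref{def:wassdis} and then use the composition property of Lipschitz functions to convert a test function on the target space into a test function on the source space whose Lipschitz norm has only grown by at most a factor of $C_L$.

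First, I would write
\begin{align*}
W_1(g(\mathcal{D}_1), g(\mathcal{D}_2)) = \max_{\|f\|_L \leq 1} \Big| \int f \, d g(\mathcal{D}_1) - \int f \, d g(\mathcal{D}_2) \Big|
\end{align*}
directly from the definition, where the maximum is over real-valued functions $f$ on the target space with $\|f\|_L \leq 1$. By Definition~\ref{def:pushforDist} of the pushforward distribution, the standard change-of-variables identity $\int f \, d(g(\mathcal{D})) = \int (f \circ g) \, d\mathcal{D}$ holds for every such $f$, so
\begin{align*}
W_1(g(\mathcal{D}_1), g(\mathcal{D}_2)) = \max_{\|f\|_L \leq 1} \Big| \int (f \circ g) \, d\mathcal{D}_1 - \int (f \circ g) \, d\mathcal{D}_2 \Big|.
\end{align*}

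The key step is to bound the Lipschitz norm of the composition $f \circ g$. For any two points $a, b$ in the source space,
\begin{align*}
|f(g(a)) - f(g(b))| \leq \|f\|_L \cdot \|g(a) - g(b)\| \leq \|f\|_L \cdot C_L \cdot \|a - b\|,
\end{align*}
which gives $\|f \circ g\|_L \leq C_L \cdot \|f\|_L \leq C_L$ whenever $\|f\|_L \leq 1$. Therefore the rescaled function $h := (f \circ g)/C_L$ satisfies $\|h\|_L \leq 1$, and by the dual characterization applied on the source space,
\begin{align*}
\Big| \int h \, d\mathcal{D}_1 - \int h \, d\mathcal{D}_2 \Big| \leq W_1(\mathcal{D}_1, \mathcal{D}_2).
\end{align*}
Multiplying by $C_L$ and taking the maximum over all $f$ with $\|f\|_L \leq 1$ on the target side yields $W_1(g(\mathcal{D}_1), g(\mathcal{D}_2)) \leq C_L \cdot W_1(\mathcal{D}_1, \mathcal{D}_2)$, as claimed.

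There is no real obstacle here beyond careful bookkeeping; the only subtlety is the implicit assumption that $g$ maps into a metric space on which the 1-Wasserstein distance is well defined and that the dual representation of Definition~\ref{def:wassdis} applies there. I would briefly note that if $C_L = 0$ then $g$ is constant, both pushforwards are Dirac masses at the same point, and both sides vanish, so the bound holds trivially; the argument above covers the generic case $C_L > 0$.
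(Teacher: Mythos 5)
Your proof is correct. Note that the paper itself states Lemma~\ref{lem:contWass} without any proof, treating it as a standard fact used inside the proofs of Theorems~\ref{thm:uptradeoff} and~\ref{thm:rptradeoff}, so there is no paper argument to compare against; your write-up supplies exactly the standard justification. Since Definition~\ref{def:wassdis} already takes the Kantorovich--Rubinstein dual form as the \emph{definition} of $W_1$, your route --- change of variables $\int f\, d(g(\mathcal{D})) = \int (f\circ g)\, d\mathcal{D}$ for the pushforward, the composition bound $\|f\circ g\|_L \leq \|f\|_L\, C_L$, rescaling by $C_L$ to get an admissible test function on the source space, and taking the supremum --- is the natural and complete argument, and the separate handling of the degenerate case $C_L = 0$ is a nice touch. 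The only caveat you already flag yourself (that the target space must carry a metric so that $W_1$ and Lipschitz norms make sense there) matches the paper's implicit setting, where $g$ maps into the norm-bounded representation space $\mathcal{Z}$ or into $\mathcal{Y}\subseteq\mathbb{R}$.
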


\begin{lemma}[1-Wasserstein distance on Bernoulli random variables]
\label{lem:berprob}
Let $y_1$ and $y_2$ be two Bernoulli random variables with distributions $\mathcal{D}_1$ and $\mathcal{D}_2$, respectively. Then, 
$W_1(\mathcal{D}_1, \mathcal{D}_2)  = |\textrm{Pr}(y_1=1) - \textrm{Pr}(y_2=1)|$.
\end{lemma}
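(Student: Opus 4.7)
The plan is to invoke the dual (Kantorovich--Rubinstein) formulation of the $1$-Wasserstein distance given in Definition~\ref{def:wassdis} and exploit the two-point support of Bernoulli distributions to collapse the variational maximization into a one-dimensional problem. For any $1$-Lipschitz test function $f : \{0,1\} \to \mathbb{R}$, I would expand each integral as the finite sum $\int f\, d\mathcal{D}_i = f(0)\,\textrm{Pr}(y_i=0) + f(1)\,\textrm{Pr}(y_i=1)$, and then use the identity $\textrm{Pr}(y_i=0) = 1 - \textrm{Pr}(y_i=1)$ to combine the two Bernoullis. After a short simplification, the difference should factor cleanly as
\[
\int f\, d\mathcal{D}_1 - \int f\, d\mathcal{D}_2 = \bigl[f(1)-f(0)\bigr]\bigl[\textrm{Pr}(y_1=1)-\textrm{Pr}(y_2=1)\bigr].
\]

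Taking absolute values and applying the Lipschitz constraint $|f(1)-f(0)| \leq \|f\|_L \cdot |1-0| \leq 1$ immediately yields the upper bound $W_1(\mathcal{D}_1,\mathcal{D}_2) \leq |\textrm{Pr}(y_1=1) - \textrm{Pr}(y_2=1)|$. For the matching lower bound, I would exhibit an explicit witness: take $f^*(x) = x$ if $\textrm{Pr}(y_1=1) \geq \textrm{Pr}(y_2=1)$, and $f^*(x) = -x$ otherwise. Either choice has Lipschitz norm exactly $1$, is therefore admissible in the variational problem, and plugging it in recovers $|\textrm{Pr}(y_1=1) - \textrm{Pr}(y_2=1)|$ exactly, establishing the reverse inequality and hence equality.

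The argument is essentially a direct computation, so I do not anticipate a genuine obstacle. The only points that warrant care are the sign bookkeeping inside the absolute value (handled by the two-case choice of $f^*$) and verifying that the affine witness saturates the Lipschitz constraint on $\{0,1\}$; everything else follows immediately from the two-point support of a Bernoulli random variable. This lemma, together with Lemma~\ref{lem:contWass}, is precisely the bridge that will let the proof of Theorem~\ref{thm:uptradeoff} convert an advantage on $u$ (a Bernoulli-valued quantity) into a Wasserstein distance on the representation space, which can then be controlled by the Lipschitz constant $C_L$ of the task classifier.
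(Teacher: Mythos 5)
Your proof is correct and complete: the paper states Lemma~\ref{lem:berprob} as a standard fact without supplying a proof, and your direct computation via the Kantorovich--Rubinstein dual form of Definition~\ref{def:wassdis} on the two-point support $\{0,1\}$, together with the explicit witness $f^*(x)=\pm x$ saturating the Lipschitz constraint, is exactly the standard argument one would give. Nothing is missing.
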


\begin{lemma}
[Relationship between the 1-Wasserstein distance and the TV distance~\citep{gibbs2002choosing}]
\label{lem:tvWass}
Let $g$ be a function defined on a norm-bounded space $\mathcal{Z}$, where $\max_{{\bf z} \in \mathcal{Z}} \|{\bf z} \| \leq R$, and $\mathcal{D}_1$ and $\mathcal{D}_1$ are two distributions over the space $\mathcal{Z}$.  Then $W_1(g(\mathcal{D}_1), g(\mathcal{D}_2)) \leq 2R \cdot d_{TV}(g(\mathcal{D}_1), g(\mathcal{D}_2))$. 
\end{lemma}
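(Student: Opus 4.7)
The plan is to prove the inequality via the \emph{coupling} formulation of the $1$-Wasserstein distance, namely $W_1(\mu,\nu) = \inf_{\gamma \in \Gamma(\mu,\nu)} \mathbb{E}_{(X,Y)\sim\gamma}[\|X-Y\|]$, where $\Gamma(\mu,\nu)$ is the set of all joint laws with marginals $\mu$ and $\nu$. Applied to $\mu = g(\mathcal{D}_1)$ and $\nu = g(\mathcal{D}_2)$, which are supported on the norm-bounded set $g(\mathcal{Z}) \subseteq \mathcal{Z}$, it suffices to exhibit a \emph{single} coupling whose expected transport cost is at most $2R \cdot d_{TV}(\mu,\nu)$; the infimum defining $W_1$ will then be at least as small.

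First I would invoke the standard \emph{maximal coupling} of $\mu$ and $\nu$: a joint law $\gamma^*$ on $\mathcal{Z} \times \mathcal{Z}$ satisfying $\mathrm{Pr}_{\gamma^*}(X = Y) = 1 - d_{TV}(\mu,\nu)$. Such a coupling can be built from the Jordan decomposition of the signed measure $\mu - \nu$: place the common component (of total mass $1 - d_{TV}(\mu,\nu)$) on the diagonal $\{X = Y\}$, and distribute the excess parts $(\mu-\nu)^+$ and $(\mu-\nu)^-$ (each of total mass $d_{TV}(\mu,\nu)$ by Definition~\ref{def:TV}) as an independent product off the diagonal.

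Next I would bound the transport cost under $\gamma^*$: on $\{X = Y\}$ the integrand vanishes, and on $\{X \neq Y\}$ the triangle inequality together with the uniform norm bound $\|\cdot\| \leq R$ gives $\|X - Y\| \leq \|X\| + \|Y\| \leq 2R$. Combining the two events,
\begin{equation*}
W_1(g(\mathcal{D}_1), g(\mathcal{D}_2)) \leq \mathbb{E}_{\gamma^*}[\|X-Y\|] \leq 2R \cdot \mathrm{Pr}_{\gamma^*}(X \neq Y) = 2R \cdot d_{TV}(g(\mathcal{D}_1), g(\mathcal{D}_2)),
\end{equation*}
which is precisely the stated bound.

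The only genuine obstacle I foresee is making the maximal coupling rigorous when $\mathcal{Z}$ is not discrete, but this is standard measure-theoretic fare on Polish spaces and can be invoked as a black box. An equally short alternative would bypass couplings entirely by appealing directly to the Kantorovich--Rubinstein dual in Definition~\ref{def:wassdis}: any $1$-Lipschitz test function $h$ on $g(\mathcal{Z})$ has range of diameter at most $2R$ by the norm bound, so after subtracting its infimum it takes values in $[0, 2R]$; pairing this with the signed-measure estimate $\int h\, d(\mu-\nu) \leq (\sup h - \inf h)\cdot d_{TV}(\mu,\nu)$ (obtained from the Jordan decomposition of $\mu - \nu$) and taking the supremum over $h$ recovers the factor $2R$ cleanly without any reference to couplings.
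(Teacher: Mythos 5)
Your proof is correct. The paper itself gives no proof of this lemma --- it is imported verbatim from Gibbs and Su (2002) as a known bound of the form $W_1 \leq \mathrm{diam}(\Omega)\cdot d_{TV}$, with $\mathrm{diam} \leq 2R$ on a norm-$R$-bounded space --- and your maximal-coupling argument (zero cost on the diagonal carrying mass $1-d_{TV}$, cost at most $\|X\|+\|Y\|\leq 2R$ off it) is exactly the standard derivation in that reference. Your dual alternative is also sound: with the paper's Definition~\ref{def:TV}, each of $(\mu-\nu)^{\pm}$ has total mass $d_{TV}(\mu,\nu)$, so a $1$-Lipschitz test function shifted to take values in $[0,2R]$ yields $|\int h\,d(\mu-\nu)|\leq 2R\,d_{TV}(\mu,\nu)$. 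The one thing worth flagging is a looseness in the lemma's statement rather than in your proof: the hypothesis bounds the \emph{domain} $\mathcal{Z}$ of $g$, but what the argument actually needs is that the pushforward measures $g(\mathcal{D}_1),g(\mathcal{D}_2)$ are supported on a set of diameter at most $2R$; you handle this by assuming $g(\mathcal{Z})\subseteq\mathcal{Z}$, which matches the paper's actual use of the lemma (there $g=f$ maps into the representation space $\mathcal{Z}$ with $\max_{\mathbf{z}\in\mathcal{Z}}\|\mathbf{z}\|\leq R$).
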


\begin{lemma}
[Relationship between the TV distance and advantage~\cite{liao2021information}]
\label{lem:tvAdv}
Given a binary attribute $u \in \{0,1\}$. 
Let $\mathcal{D}_{u=a}$ be the conditional data distribution of $\mathcal{D}$ given $u=a$ over a sample space $\Gamma$. Let $Adv_\mathcal{D}(\mathcal{A})$ be the advantage of adversary. %as defined in Equation (\ref{}).  
Then for any function $f$, we have  
$d_{TV}(f(\mathcal{D}_{u=0}), f(\mathcal{D}_{u=1})) = \textrm{Adv}_{\mathcal{D}}(\mathcal{A})$.

\end{lemma}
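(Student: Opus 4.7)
The plan is to rewrite both sides of the claimed identity as a supremum over measurable subsets $E \subseteq \mathcal{Z}$ and then observe that the two suprema coincide termwise. The key bridge is the correspondence between binary classifiers $A: \mathcal{Z} \to \{0,1\}$ and measurable sets via $E_A := \{{\bf z} : A({\bf z}) = 1\}$: because $\mathcal{A}$ is the set of \emph{all} binary classifiers, this map is a bijection onto the measurable subsets of $\mathcal{Z}$, so ranging over $A \in \mathcal{A}$ is the same as ranging over $E$.

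First, I would dispose of the $\forall a$ clause in the definition of $\textrm{Adv}_\mathcal{D}(\mathcal{A})$ by noting that the quantity inside the max is independent of $a$. Since ${\rm Pr}(A({\bf z}) = 0 \mid u = c) = 1 - {\rm Pr}(A({\bf z}) = 1 \mid u = c)$ for each $c \in \{0,1\}$, the two candidate expressions (for $a=0$ and $a=1$) differ only by a common sign inside $|\cdot|$ and are therefore equal, so I may fix $a = 1$ and work with $\max_A \, |{\rm Pr}(A({\bf z}) = 1 \mid u = 1) - {\rm Pr}(A({\bf z}) = 1 \mid u = 0)|$. Next, by the pushforward definition (Definition 3), for each $c$ and each measurable $E$, $f(\mathcal{D}_{u=c})(E) = {\rm Pr}(f({\bf x}) \in E \mid u = c) = {\rm Pr}({\bf z} \in E \mid u = c)$; specialising this to $E = E_A$ gives ${\rm Pr}(A({\bf z}) = 1 \mid u = c) = f(\mathcal{D}_{u=c})(E_A)$. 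Plugging this into the simplified advantage and then reindexing the maximum through the bijection $A \leftrightarrow E_A$ yields $\textrm{Adv}_\mathcal{D}(\mathcal{A}) = \max_E |f(\mathcal{D}_{u=1})(E) - f(\mathcal{D}_{u=0})(E)|$, which is precisely $d_{TV}(f(\mathcal{D}_{u=0}), f(\mathcal{D}_{u=1}))$ by Definition 2.

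The main obstacle is really a measure-theoretic housekeeping concern rather than a substantive difficulty: one must ensure that the maximum over $A \in \mathcal{A}$ is genuinely a supremum over all $\{0,1\}$-valued measurable functions on $\mathcal{Z}$, so that every indicator $\mathbf{1}_E$ is realised by some classifier in $\mathcal{A}$, and that the pushforward $f(\mathcal{D}_{u=c})$ is well defined, which silently demands that $f$ be measurable. Under these standard background assumptions---implicit throughout the paper's setup---the identity reduces to the two-line rewriting above without any additional quantitative estimates.
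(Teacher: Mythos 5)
Your argument is correct and matches the paper's own treatment: the paper cites this lemma from prior work, but the identical identification of $d_{TV}$ events with binary classifiers (via characteristic functions $A_E$) appears verbatim in its proof of Theorem~\ref{thm:rptradeoff}, Equation~(\ref{eqn:tvadv}), for the perturbed distribution $\mathcal{D}'$. Your extra observations---that the $\forall a$ clause is redundant by the complement symmetry, and that $f$ must be measurable for the pushforward to make sense---are correct housekeeping points the paper leaves implicit.
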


We now prove Theorem~\ref{thm:uptradeoff}, which is restated as below: 

\uptradeoff*

\begin{proof}
We denote $\mathcal{D}_{u=a}$ as the conditional data distribution of $\mathcal{D}$ given $u=a$, and $\mathcal{D}_{y|u}$ as the conditional distribution of label $y$ given $u$. 
$cf$ is denoted as the (binary) composition function of  $c \circ f_{\Theta}$. 
As $c$ is binary task classifier on the learnt representations, it follows that the pushforward 
$cf(\mathcal{D}_{u=0})$ and $cf(\mathcal{D}_{u=1})$ induce two distributions over the binary label space  $\mathcal{Y} = \{0,1\}$. %with $a=\{0,1\}$. 
By leveraging the triangle inequalities of the 1-Wasserstein distance, we have

\begin{align}
& W_1 (\mathcal{D}_{y|u=0}, \mathcal{D}_{y|u=1}) 
\nonumber \\ 
& \leq W_1 (\mathcal{D}_{y|u=0}, cf(\mathcal{D}_{u=0})) + 
W_1 (cf(\mathcal{D}_{u=0}), cf(\mathcal{D}_{u=1})) \nonumber \\
& \qquad +  W_1 (cf(\mathcal{D}_{u=1}), \mathcal{D}_{y|u=1})
\label{eqn:keytri}
\end{align}

Using Lemma~\ref{lem:berprob} on Bernoulli random variables $y|u=a$: 
\begin{align}
\label{eqn:berprob}
& W_1 (\mathcal{D}_{y|u=0}, \mathcal{D}_{y|u=1}) \notag \\ 
& =  |\textrm{Pr}_{\mathcal{D}}(y=1|u=0) - \textrm{Pr}_{\mathcal{D}}(y=1|u=1) | \notag \\
& = \Delta_{y|u}.
\end{align}

Using Lemma~\ref{lem:contWass} on the contraction of the 1-Wasserstein distance and that $\|c\|_L \leq C_L$, we have 
\begin{align}
\label{eqn:contWass}
W_1 (cf(\mathcal{D}_{u=0}), cf(\mathcal{D}_{u=1})) \leq C_L \cdot W_1(f(\mathcal{D}_{u=0}), f(\mathcal{D}_{u=1})).
\end{align}

Using Lemma~\ref{lem:tvWass} with $\max_{{\bf z}} \|{\bf z}\| \leq R$ 
, we have 
\begin{align}
\label{eqn:tvWass}
& W_1(f(\mathcal{D}_{u=0}), f(\mathcal{D}_{u=1})) \notag \\ 
& \leq 2R \cdot d_{TV}(f(\mathcal{D}_{u=0}), f(\mathcal{D}_{u=1})) \notag \\ 
& = 2R \cdot \textrm{Adv}_{\mathcal{D}}(\mathcal{A}), 
\end{align}
where the last equation is based on Lemma~\ref{lem:tvAdv}. 

Combing Equations~\ref{eqn:contWass} and~\ref{eqn:tvWass}, we have 
$$W_1 (cf(\mathcal{D}_{u=0}), cf(\mathcal{D}_{u=1})) \leq 2R \cdot  C_L \cdot \textrm{Adv}_{\mathcal{D}}(\mathcal{A}).$$ 
Furthermore, using Lemma~\ref{lem:berprob} on Bernoulli random variables $y$ and $cf({\bf x})$, we have 
\begin{align}
& W_1 (\mathcal{D}_{y|u=a}, cf(\mathcal{D}_{u=a})) \notag \\
& = |\textrm{Pr}_{\mathcal{D}}(y=1 | u=a) - \textrm{Pr}_{\mathcal{D}}({cf({\bf x}})=1 | u=a)) | \nonumber \\
& = |\mathbb{E}_{\mathcal{D}}[y|u=a] - \mathbb{E}_{\mathcal{D}}[cf({\bf x})| u=a]| \nonumber \\
& \leq \mathbb{E}_{\mathcal{D}}[|y-cf({\bf x})| |u=a] \nonumber \\
& = \textrm{Pr}_{\mathcal{D}}(y \neq cf({\bf x}) | u=a) \nonumber \\
& = \textrm{Risk}_{u=a}(c \circ f). \label{eqn:CEloss}
\end{align}
Hence, $W_1 (\mathcal{D}_{y|u=0}, cf(\mathcal{D}_{u=0}))+ W_1 (\mathcal{D}_{y|u=1}, cf(\mathcal{D}_{u=1})) \leq \textrm{Risk}(c \circ f)$.

Finally, by combining Equations (\ref{eqn:keytri}) - (\ref{eqn:CEloss}), we have:  
\begin{align}
    \Delta_{y|u} & \leq \textrm{Risk}(c \circ f)+ 2R \cdot C_L \cdot \textrm{Adv}_{\mathcal{D}}(\mathcal{A}),
\end{align}
thus 
$
    \mathrm{Risk}(c \circ f) 
    \geq \Delta_{y|u} - 2R \cdot C_L \cdot \textrm{Adv}_{\mathcal{D}}(\mathcal{A}),
$ completing the proof.

\end{proof}

\subsection{Proof of Theorem \ref{thm:rptradeoff}}
\label{supp:rptradeoff}

We follow the way as proving Theorem~\ref{thm:uptradeoff}. 
We first restate Theorem~\ref{thm:rptradeoff} as below: 
\rptradeoff*

\begin{proof}
Recall that $D'$ is a joint distribution of the perturbed input ${\bf x}'$, the label $y$, and private attribute $u$. 
We denote $\mathcal{D}'_{u=a}$ as the conditional perturbed data distribution of $\mathcal{D}'$ given $u=a$, and $\mathcal{D}'_{y|u}$ as the conditional distribution of label $y$ given $u$. 
Also, the pushforward 
$cf(\mathcal{D}'_{u=a})$ induces two distributions over the binary label space $\mathcal{Y} = \{0,1\}$ with $a=\{0,1\}$. 
Via the triangle inequalities of the 1-Wasserstein distance, we have
\begin{align}
& W_1 (\mathcal{D}'_{y|u=0}, \mathcal{D}'_{y|u=1}) \notag \\
& \leq W_1 (\mathcal{D}'_{y|u=0}, cf(\mathcal{D}'_{u=0})) + 
W_1 (cf(\mathcal{D}'_{u=0}), cf(\mathcal{D}'_{u=1})) \notag \\
& \quad +  W_1 (cf(\mathcal{D}'_{u=1}), \mathcal{D}'_{y|u=1})
\label{eqn:keytri_w}
\end{align}
Using Lemma~\ref{lem:berprob} on Bernoulli random variables $y|u=a$:
\begin{small}
\begin{align}
\label{eqn:berprob_w}
W_1 (\mathcal{D}'_{y|u=0}, \mathcal{D}'_{y|u=1}) & =  |\textrm{Pr}_{\mathcal{D}'}(y=1|u=0) - \textrm{Pr}_{\mathcal{D}'}(y=1|u=1) | 
\notag \\ 
& =  |\textrm{Pr}_{\mathcal{D}}(y=1|u=0) - \textrm{Pr}_{\mathcal{D}}(y=1|u=1) | \notag \\
& = \Delta_{y|u},
\end{align}
\end{small}
where we use that the perturbed data  and clean data share the same label $y$ condition on $u$. 

Then following the proof of Theorem~\ref{thm:uptradeoff}, we have: 
\begin{align}
& W_1 (cf(\mathcal{D}'_{u=0}), cf(\mathcal{D}'_{u=1})) \leq C_L \cdot W_1(f(\mathcal{D}'_{u=0}), f(\mathcal{D}'_{u=1})); \label{eqn:contWass_adv} \\
& W_1(f(\mathcal{D}'_{u=0}), f(\mathcal{D}'_{u=1})) \leq 2R \cdot d_{TV}(f(\mathcal{D}'_{u=0}), f(\mathcal{D}'_{u=1})).
\end{align}

We further show $d_{TV}(f(\mathcal{D}'_{u=0}), f(\mathcal{D}'_{u=1})) = \textrm{Adv}_{\mathcal{D}'}(\mathcal{A})$:
\begin{align}
& d_{TV}(f(\mathcal{D}'_{u=0}), f(\mathcal{D}'_{u=1})) \notag \\ 
& = \max_{E} |\textrm{Pr}_{f(\mathcal{D}'_{u=0})}(E) - \textrm{Pr}_{f(\mathcal{D}'_{u=1})}(E)| \nonumber \\
& = \max_{A_E \in \mathcal{A}} | \textrm{Pr}_{{\bf z}' \sim f(\mathcal{D}'_{u=0})}(A_E({\bf z}')=1)  - \textrm{Pr}_{{\bf z}' \sim f(\mathcal{D}'_{u=1})}(A_E({\bf z})=1) | \nonumber \\
& = \max_{A_E \in \mathcal{A}} | \textrm{Pr}(A_E({\bf z}')=1 | u=0) - \textrm{Pr}(A_E({\bf z}')=1 | u=1) | \nonumber \\
& = \textrm{Adv}_{\mathcal{D}'}(\mathcal{A}), \label{eqn:tvadv}
\end{align}
where the first equation uses the definition of TV distance, and  $A_E(\cdot)$ is the characteristic function of the event $E$ in the second equation.

With Equations (\ref{eqn:contWass_adv}) - (\ref{eqn:tvadv}), we have 
$$W_1 (cf(\mathcal{D}'_{u=0}), cf(\mathcal{D}'_{u=1})) \leq 2R \cdot  C_L \cdot \textrm{Adv}_{\mathcal{D}'}(\mathcal{A}).$$ 
Furthermore, using Lemma~\ref{lem:berprob} on Bernoulli random variables $y$ and $cf({\bf x})$, we have 
\begin{align}
& W_1 (\mathcal{D}'_{y|u=0}, cf(\mathcal{D}'_{u=0})) +  W_1 (\mathcal{D}'_{y|u=1}, cf(\mathcal{D}'_{u=1})) \notag \\
& = |\textrm{Pr}_{\mathcal{D}'}(y=1 | u=0) - \textrm{Pr}_{\mathcal{D}'}({cf({\bf x}'})=1 | u=0)) |  \notag \\ 
& \quad + |\textrm{Pr}_{\mathcal{D}'}(y=1 | u=1) - \textrm{Pr}_{\mathcal{D}'}({cf({\bf x}'})=1 | u=1)) | \nonumber \\
& = |\mathbb{E}_{\mathcal{D}'}[y|u=0] - \mathbb{E}_{\mathcal{D}'}[cf({\bf x}')| u=0]| \notag \\ 
& \quad + |\mathbb{E}_{\mathcal{D}'}[y|u=1] - \mathbb{E}_{\mathcal{D}'}[cf({\bf x}')| u=1]| \nonumber \\
& \leq \mathbb{E}_{\mathcal{D}'}[|y-cf({\bf x}')| |u=0] + \mathbb{E}_{\mathcal{D}'}[|y-cf({\bf x}')| |u=1] \nonumber \\
& = \textrm{Pr}_{\mathcal{D}'}(y \neq cf({\bf x}') | u=0) + \textrm{Pr}_{\mathcal{D}'}(y \neq cf({\bf x}') | u=1) \nonumber \\
& = \textrm{Pr}_{\mathcal{D}'}(y \neq cf({\bf x}')) \nonumber \\
& = \textrm{Pr}_{\mathcal{D}} [\exists {\bf x}' \in \mathcal{B}({\bf x}, \epsilon), \, \textrm{s.t. } cf({\bf x}') \neq y] \notag \\ 
& = \textrm{AdvRisk}_\epsilon(c \circ f).  
\label{eqn:advrisk_tWass}
\end{align}

Finally, by combining Equations (\ref{eqn:keytri_w}) - (\ref{eqn:advrisk_tWass}), we have:  
\begin{align}
    \Delta_{y|u} & \leq \textrm{AdvRisk}_\epsilon(c \circ f) + 2R \cdot C_L \cdot \textrm{Adv}_{\mathcal{D}'}(\mathcal{A}) \notag
\end{align}

Hence, 
$
    \textrm{AdvRisk}_\epsilon(c \circ f) 
    \geq \Delta_{y|u} - 2R \cdot C_L \cdot \textrm{Adv}_{\mathcal{D}'}(\mathcal{A}),
$ completing the proof. 

\end{proof}

\subsection{Proof of Theorem \ref{thm:provprivacy}}
\label{supp:provprivacy}

We first point out that \cite{zhao2020trade} also provide the theoretical result in Theorem 3.1 against attribute inference attacks. However, there are two key differences between theirs and our Theorem~\ref{thm:provprivacy}: 
First, Theorem 3.1  requires an assumption $I(\hat{A}; A|Z)=0$, while our Theorem 5 does not need extra assumption; 2) The proof for Theorem 3.1 decomposes an joint entropy $H(A, \hat{A}, E)$, while our proof decomposes a conditional entropy $H(s, u | A(z))$. We note that the main idea to prove both theorems is by introducing an indicator and decomposing an entropy in two different ways. 

The following lemma about the inverse binary entropy will be useful in the proof of Theorem~\ref{thm:provprivacy}:
\begin{lemma}[\citep{calabro2009exponential} Theorem 2.2]
\label{lem:inveren}
Let $H_2^{-1}(p)$ be the inverse binary entropy function for $p \in [0,1]$, then $H_2^{-1}(p) \geq \frac{p}{2 \log_2(\frac{6}{p})}$.
\end{lemma}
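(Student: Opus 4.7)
The plan is to reduce the lemma to a concrete upper bound on the binary entropy and then verify one explicit inequality. Write $H_2(q) = -q\log_2 q - (1-q)\log_2(1-q)$ and take the inverse $H_2^{-1}$ on the principal branch $[0,1/2]$. The strategy is: set $q_\star := p/(2\log_2(6/p))$, show $q_\star \in (0,1/2]$, and prove $H_2(q_\star) \le p$; since $H_2$ is strictly increasing on $[0,1/2]$, this gives $q_\star \le H_2^{-1}(p)$, which is exactly the claim.

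First I would establish the elementary upper bound $H_2(q) \le q\log_2(e/q)$ valid for all $q\in(0,1/2]$. This follows from the calculus estimate $-\ln(1-q) \le q/(1-q)$ (which holds for $q\in[0,1)$): multiplying by $(1-q)$ and dividing by $\ln 2$ yields $-(1-q)\log_2(1-q) \le q\log_2 e$, and combining this with the $-q\log_2 q$ term gives the stated bound. This is the crucial one-line estimate on which everything else rests, because it is sharp enough to produce the constant $6$ in the final statement.

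Second, I would verify that $q_\star\in(0,1/2]$ for every $p\in(0,1]$. Since $\log_2(6/p) \ge \log_2 6 > 2 \ge p$, we immediately get $q_\star < 1/2$. Third, I would substitute $q_\star$ into the entropy bound. Introducing the reparametrization $t := \log_2(6/p)$, so $p = 6\cdot 2^{-t}$ and $q_\star = 3\cdot 2^{-t}/t$, the inequality $H_2(q_\star) \le p$ reduces, via $H_2(q_\star) \le q_\star\log_2(e/q_\star)$, to the single-variable inequality
\begin{equation*}
\frac{3}{t\cdot 2^t}\Bigl(t + \log_2 t + \log_2\tfrac{e}{3}\Bigr) \;\le\; \frac{6}{2^t},
\end{equation*}
which after algebra is equivalent to $et/3 \le 2^t$. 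For the range $p\in(0,1]$ we have $t \ge \log_2 6 > 2$, and on $[1,\infty)$ the function $2^t - et/3$ is increasing (its derivative $2^t\ln 2 - e/3$ is already positive at $t=1$) with positive value at $t=1$, so the inequality holds throughout. Finally, the edge case $p=0$ is trivial since $H_2^{-1}(0)=0$ and the right-hand side vanishes (interpreted as a limit).

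The only real obstacle is choosing an upper bound for $H_2(q)$ that is simultaneously simple enough to manipulate and tight enough to deliver the constant $6$ inside the logarithm; weaker bounds such as $H_2(q)\le 2q\log_2(1/q)$ would yield a worse constant. Using $H_2(q)\le q\log_2(e/q)$ is precisely the right calibration, after which everything reduces to the scalar inequality $et/3\le 2^t$ on $t\ge \log_2 6$, which is a one-line check.
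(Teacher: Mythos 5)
Your argument is correct, and it is worth noting that the paper does not prove this lemma at all: it is imported verbatim as Theorem~2.2 of \citet{calabro2009exponential} and used as a black box in the proof of Theorem~\ref{thm:provprivacy}. Your proposal therefore supplies a self-contained derivation where the paper only cites. The chain of steps checks out: the elementary estimate $-\ln(1-q)\le q/(1-q)$ gives $H_2(q)\le q\log_2(e/q)$; for $p\in(0,1]$ one has $t=\log_2(6/p)\ge\log_2 6>2$, so $q_\star=p/(2t)<p/4\le 1/2$; substituting $q_\star=3\cdot 2^{-t}/t$ into the entropy bound and cancelling $3/(t2^t)$ reduces $H_2(q_\star)\le p$ exactly to $t+\log_2 t+\log_2(e/3)\le 2t$, i.e.\ $et/3\le 2^t$, which indeed holds on $[1,\infty)$ since $2^t-et/3$ is positive at $t=1$ (value $2-e/3$) with positive derivative ($2\ln 2>e/3$); monotonicity of $H_2$ on $[0,1/2]$ then converts $H_2(q_\star)\le p$ into $q_\star\le H_2^{-1}(p)$, and $p=0$ is trivial. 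The only cosmetic remark is that your bound $H_2(q)\le q\log_2(e/q)$ actually holds on all of $(0,1)$, so the restriction to $(0,1/2]$ is not needed there; what matters is only that $q_\star\le 1/2$ so the monotonicity step applies. Compared with the paper's citation, your route buys transparency about where the constant $6$ comes from (it is exactly what makes the scalar inequality $et/3\le 2^t$ hold down to $t=\log_2 6$ with room to spare), at the cost of a page of calculus that the citation avoids.
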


\begin{lemma}[Data processing inequality] 
\label{lem:datainq}
%Data processing inequality states that: 
Given random variables $X$, $Y$, and $Z$ that form a Markov chain in the
order $X \rightarrow Y \rightarrow Z$, then the mutual information between $X$ and $Y$ is greater than or equal to the mutual information between $X$ and $Z$. That is $I(X;Y) \geq I(X;Z)$.
\end{lemma}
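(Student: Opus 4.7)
The plan is to prove the data processing inequality by expanding the joint mutual information $I(X; (Y, Z))$ in two different ways using the chain rule for mutual information, and then invoking the Markov property $X \rightarrow Y \rightarrow Z$ to eliminate one of the conditional mutual information terms.

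First, I would apply the chain rule to obtain two expansions:
\begin{align*}
I(X; (Y, Z)) & = I(X; Y) + I(X; Z \mid Y), \\
I(X; (Y, Z)) & = I(X; Z) + I(X; Y \mid Z).
\end{align*}
Equating these two gives $I(X; Y) + I(X; Z \mid Y) = I(X; Z) + I(X; Y \mid Z)$. Next, I would exploit the Markov assumption: since $X \rightarrow Y \rightarrow Z$, the variables $X$ and $Z$ are conditionally independent given $Y$, so $p(x, z \mid y) = p(x \mid y)\, p(z \mid y)$ and hence $I(X; Z \mid Y) = 0$. Substituting this into the previous equality yields $I(X; Y) = I(X; Z) + I(X; Y \mid Z)$.

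Finally, I would invoke the non-negativity of (conditional) mutual information, $I(X; Y \mid Z) \geq 0$, which follows from the fact that it is an expected Kullback--Leibler divergence between $p(x, y \mid z)$ and $p(x \mid z)\, p(y \mid z)$, averaged over $z$. This immediately gives $I(X; Y) \geq I(X; Z)$, completing the proof.

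The argument is standard and does not really have a serious obstacle; the only point requiring care is rigorously justifying that the Markov chain assumption implies $I(X; Z \mid Y) = 0$, which reduces to writing out the conditional independence $p(x, z \mid y) = p(x \mid y)\, p(z \mid y)$ explicitly and plugging it into the definition $I(X; Z \mid Y) = \mathbb{E}_{Y}[\mathrm{KL}(p(X, Z \mid Y) \,\|\, p(X \mid Y)\, p(Z \mid Y))]$, which vanishes.
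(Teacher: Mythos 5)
Your proof is correct: the chain-rule double expansion of $I(X;(Y,Z))$, the vanishing of $I(X;Z\mid Y)$ under the Markov assumption, and the non-negativity of conditional mutual information together give exactly $I(X;Y)\geq I(X;Z)$. The paper states this lemma as a standard fact without proof, and your argument is the canonical textbook derivation, so there is nothing to reconcile.
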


With the above lemma, we are ready to prove Theorem~\ref{thm:provprivacy} restated as below. 
\provprivacy*

\begin{proof}
Let $s$ be an indicator that takes value 1 if and only if 
$\mathcal{A}({\bf z}) \neq u$, and 0 otherwise, i.e., $s = 1[\mathcal{A}({\bf z}) \neq u]$. Now consider the conditional entropy $H(s, u|\mathcal{A}({\bf z}))$ associated with $\mathcal{A}({\bf z})$, $u$, and $s$. By decomposing it in two different ways, we have 
\begin{align}
\label{eqn:entdecompo}
H(s, u|\mathcal{A}({\bf z})) & = H(u|\mathcal{A}({\bf z})) + H(s | u,  \mathcal{A}({\bf z})) \notag \\ 
& =  H(s | \mathcal{A}({\bf z})) + H(u| s, \mathcal{A}({\bf z})).
\end{align}

Note that $H(s | u, \mathcal{A}({\bf z}))=0$ as when $u$ and $\mathcal{A}({\bf z})$ are known, $s$ is also known. 
Similarly, 
\begin{align}
H(u| s, \mathcal{A}({\bf z})) & = Pr(s=1) H(u| s=1, \mathcal{A}({\bf z})) \notag \\
& \quad + Pr(s=0) H(u| s=0, \mathcal{A}({\bf z})) \notag \\ 
& = 0+0 = 0,
\end{align}
because when we know $s$'s value and $\mathcal{A}({\bf z})$, we also actually knows $u$.

Thus, Equation~\ref{eqn:entdecompo} reduces to $H(u|\mathcal{A}({\bf z}))=  H(s | \mathcal{A}({\bf z}))$. 
As conditioning does not increase entropy, i.e.,  
$H(s | \mathcal{A}({\bf z})) \leq H(s)$, we further have
\begin{align}
\label{eqn:finalone}
H(u|\mathcal{A}({\bf z})) \leq H(s).
\end{align}

On the other hand, using mutual information and entropy properties, we have 
$I(u; \mathcal{A}({\bf z})) = H(u) - H(u|\mathcal{A}({\bf z}))$ and
$I(u; {\bf z}) = H(u) - H(u|{\bf z})$. Hence, 
\begin{align}
\label{eqn:mi_en}
    I(u; \mathcal{A}({\bf z})) + H(u|\mathcal{A}({\bf z})) = I(u; {\bf z}) + H(u|{\bf z}).
\end{align}

Notice $\mathcal{A}({\bf z})$ is a random variable such that $u \perp \mathcal{A}({\bf z}) | {\bf z}$. Hence, we have the Markov chain $u \rightarrow {\bf z} \rightarrow \mathcal{A}({\bf z})$.
Based on the data processing inequality in Lemma~\ref{lem:datainq}, we know $I(u; \mathcal{A}({\bf z})) \leq I(u; {\bf z})$. 
Combining it with Equation~\ref{eqn:mi_en}, we have 
\begin{align}
\label{eqn:finaltwo}
H(u|\mathcal{A}({\bf z})) \geq H(u|{\bf z}).
\end{align}

Combing Equations (\ref{eqn:finalone}) and (\ref{eqn:finaltwo}), we have $H(s) = H_2(Pr(s=1) )\geq H(u|{\bf z})$,
which implies 
\begin{align*}
    \label{eqn:final}
    & Pr(\mathcal{A}({\bf z}) \neq u) = Pr(s=1) \geq H_2^{-1} (H(u|{\bf z})),
\end{align*}
where $H_2 (t) = -t \log_2 t - (1-t) \log_2 (1-t)$. 

Finally, by applying Lemma~\ref{lem:inveren}, we have 
$$Pr(\mathcal{A}({\bf z}) \neq u) \geq \frac{H(u|{\bf z})}{2 \log_2 ({6}/{H(u|{\bf z})})}.$$
Hence the attribute privacy leakage is bounded by $Pr(\mathcal{A}({\bf z}) = u) \leq 1 - \frac{H(u|{\bf z})}{2 \log_2 ({6}/{H(u|{\bf z})})}.$

\end{proof}

\subsection{Discussion on the Tightness of the MI Bound}

We leverage mutual information as a tool to learn robust and privacy-preserving representations as well as deriving theoretical results. However, computing the exact value of mutual information on high-dimensional feature spaces, as commonly encountered in deep learning, remains a computational challenge. Several works \cite{alemi2017deep,belghazi2018mutual,poole2019variational,hjelm2019learning,cheng2020club} have introduced various neural estimators to approximate mutual information by providing bounds. Unfortunately, as shown by \cite{mcallester2020formallimitationsmeasurementmutual}, these bounds are generally loose. At the moment, developing a method to achieve a tight mutual information bound, along with analyzing the associated approximation error, remains an open question in the field.

\section{Datasets and network architectures}
\label{appendix:Datasets}

\subsection{Detailed dataset descriptions}
\label{app:des}

\textbf{CelebA dataset~\cite{liu2015deep}.}
 CelebA consists of more than 200K face images with size 32x32. Each face image is labeled with 40 binary facial attributes. In the experiments, we use 150K images for training and 50K images for testing. 
 %We tested ARPRL on a more complex CelebA dataset [a], where we use 150K images for training and 50K images for testing. 
 We treat binary ‘gender’ as the private attribute, and detect ‘gray hair’ as the primary (binary classification) task. 

\noindent \textbf{Loans dataset~\cite{10.5555/3157382.3157469}.}
This dataset is originally extracted from the loan-level Public Use Databases. The Federal Housing Finance Agency publishes these databases yearly, containing information about the Enterprises’ single family and multifamily mortgage acquisitions.  Specifically, the database used in this project is a single-family dataset and has a variety of features related to the person asking for a mortgage loan.  All the attributes in the dataset are numerical, so no preprocessing from this side was required. On the other hand, in order to create a balanced classification problem, some of the features were modified to have a similar number of observations belonging to all classes. We use 80\% data for training and 20\% for testing. %train-test split is performed in our experiments. 

The utility under this scope was measured in the system accurately predicting the affordability category of the person asking for a loan. This attribute is named \textit{Affordability}, and has three possible values: 0 if the person belongs to a mid-income family and asking for a loan in a low-income area, 1 if the person belongs to a low-income family and asking for a loan in a low-income area, and 2 if the person belongs to a low-income family and is asking for a loan not in a low-income area. The private attribute was set to be binary  \textit{Race}, being  White (0) or Not White (1).

\noindent \textbf{Adult Income dataset~\cite{Dua:2019}.}
This is a well-known dataset available in the UCI Machine Learning Repository. The dataset contains 32,561 observations each with 15 features, some of them numerical, other strings. Those attributes are not numerical were converted into categorical using an encoder. Again, we use the 80\%-20\% train-test split. %is performed in our experiments.

The primary classification task is predicting if a person has an income above \$50,000, labeled as 1, or below, which is labeled as 0. The private attributes to predict are the \textit{Gender}, which is binary, and the \textit{Marital Status}, which has seven possible labels: 0 if Divorced, 1 if AF-spouse, 2 if Civil-spouse, 3 if Spouse absent, 4 if Never married, 5 if Separated, and 6 if Widowed. 

\subsection{Network architectures} 
\label{app:arch}

The used network architectures for the three neural networks are in Table~\ref{tab:arch}.  

\begin{table}[!t]
\caption{Network architectures for the used datasets. Note that utility preservation network is the same as robust  network.}
  \centering
    \begin{tabular}[t]{p{1.8cm}|p{1.8cm}|p{1.8cm}|p{1.5cm}}
      \hline
    \centering\textbf{Rep. Learner}&\centering\textbf{Robust Network} &\centering\textbf{Privacy Network} &\textbf{Utility Network}\\
    \hline
    \hline
    \multicolumn{4}{c}{CelebA}\\
    \hline
    \hline
    conv1-64 & conv3-64 & linear-32 & conv3-64\\ & & MaxPool \\
    \cline{1-4}
    conv64-128 & conv64-128 & linear-$\#$priv. attri. values & conv64-128\\
    \cline{1-4}
    linear-1024 & conv128-256 &  & conv128-256\\
    \cline{1-4}
    linear-64 & conv2048-2048 & & conv2048-2048\\
    & & & \\
    \cline{1-4}
    \hline
    \hline
    \multicolumn{4}{c}{Loans and Adult Income}\\
    \hline
    \hline
    linear-12  & linear-64 & linear-16 & linear-64  \\
    \cline{1-4}
    linear-3 & linear-3 & linear-$\#$priv. attri. values & linear-3 \\
    \cline{1-4}
    \hline
    \hline
    \multicolumn{4}{c}{Toy dataset}\\
    \hline
    \hline
    linear-10  & linear-64 & linear-5 & linear-64 \\
    \cline{1-4}
    linear-2 & linear-2 & linear-$\#$priv. attri. values & linear-2 \\
    \cline{1-4}
   \end{tabular}
   
   \label{tab:arch}
\end{table}

\subsection{How to Choose $\alpha$ and $\beta$}
\label{app:hyperpara}

Assume we reach the required utility with a (relatively large) value $1-\alpha-\beta$ (e.g., 0.7, 0.8; note its regularization controls the utility). Then we have a principled way to efficiently tune $\alpha$ and $\beta$ based on their meanings: 

1) We will start with three sets of $(\alpha1, \beta1), (\alpha2, \beta2), (\alpha3, \beta3)$, where one is with $\alpha1=\beta1$, one is with a larger $\alpha2 > \alpha1$ (i.e., better privacy), and one is with a larger $\beta3 > \beta1$ (better robustness), respectively. 

2) Based on the three results, we know whether a larger $\alpha$ or $\beta$ is needed to obtain a better privacy-robustness tradeoff and set their values via a binary search. For instance, if needing more privacy protection, we can set a larger $\alpha4 = \frac{\alpha1 + \alpha2}{2}$; or needing more robustness, we can set a larger $\beta4 = \frac{\beta1 + \beta3}{2}$. 

Step 2) continues until finding the optimal tradeoff $\alpha$ and $\beta$.

\end{document}